\documentclass[conference]{IEEEtran}
\IEEEoverridecommandlockouts

\usepackage{booktabs, tabularx}
\usepackage{multirow}
\usepackage{tikz}
\usepackage{amsmath}
\usepackage[binary-units]{siunitx}
\usepackage[abbreviations]{foreign}
\usepackage{xspace}
\usepackage[inline]{enumitem}
\usepackage{url}
\usepackage{hyperref}
\usepackage[most]{tcolorbox}

\usepackage[caption=false,font=footnotesize]{subfig}

\usepackage{algorithm}
\usepackage{algpseudocode}
\usepackage{amssymb}

\algrenewcommand\algorithmicrequire{\textbf{Input:}}
\algrenewcommand\algorithmiccomment[1]{\hfill\(\triangleright\) \textit{#1}}
\DeclareMathOperator{\DP}{DP}
\newcommand{\gA}{\mathsf{A}}
\newcommand{\gB}{\mathsf{B}}

\usepackage{pifont}%

\def\BibTeX{{\rm B\kern-.05em{\sc i\kern-.025em b}\kern-.08em
    T\kern-.1667em\lower.7ex\hbox{E}\kern-.125emX}}
\usepackage{ifthen}
\newboolean{showcomments}
\setboolean{showcomments}{false}
\ifthenelse{\boolean{showcomments}}
{ \newcommand{\mynote}[3]{
		\fbox{\bfseries\sffamily\scriptsize#1}
		{\small$\blacktriangleright$\textsf{\emph{\color{#3}{#2}}}$\blacktriangleleft$}}
	\newcommand{\zzz}[1]{{\setlength{\fboxsep}{2pt}\fcolorbox{black}{yellow}{\textsf{\emph{#1}}}}\xspace}}
{ \newcommand{\mynote}[3]{}
	\newcommand{\zzz}[1]{}}

\newcommand{\sayan}[1]{\mynote{Sayan}{#1}{cyan}}

\usepackage{acronym}
\acrodef{DL}{decentralized learning}
\acrodef{ML}{machine learning}
\acrodef{D-PSGD}{decentralized parallel stochastic gradient descent}
\acrodef{FL}{federated learning}
\acrodef{SGD}{stochastic gradient descent}
\acrodef{IID}{independent and identically distributed}
\acrodef{non-IID}{non independent and identically distributed}
\acrodef{RMSE}{root mean square error}
\acrodef{RMW}{random model walk}
\acrodef{GL}{gossip learning}
\acrodef{EL}{epidemic learning}
\acrodef{DWT}{discrete wavelet transform}
\acrodef{FFT}{fast Fourier transform}
\acrodef{MI}{mutual information}
\acrodef{VN}{virtual node}
\acrodef{RN}{real node}
\acrodef{LDP}{local differential privacy}
\acrodef{PNDP}{pairwise network differential privacy}
\acrodef{PNLDP}{pairwise network local differential privacy}
\acrodef{GI}{gradient inversion}
\acrodef{CML}{collaborative machine learning}
\acrodef{TPR}{true positive rate}
\acrodef{FPR}{false positive rate}
\acrodef{LLM}{large language model}
\acrodef{DP}{demographic parity}

\acrodef{LA}{linkability attack}
\acrodef{GIA}{gradient inversion attack}
\acrodef{MIA}{membership inference attack}
\acrodef{AIA}{attribute inference attack}
\acrodef{ROC}{receiver operating characteristic}
\acrodef{AUC}{area under the ROC curve}
\acrodef{LAS vector}{locally aggregated statistics vector}
\acrodef{GAS vector}{globally aggregated statistics vector}
\acrodef{ASR}{attack success rate} %
\newcommand{\sys}{\textsc{Auditopus}\xspace}

\newcommand{\civil}{Civil Comments\xspace}
\newcommand{\biasbios}{Bias in Bios\xspace}
\newcommand{\qwen}{\textsc{Qwen2.5-7B-Instruct}\xspace}

\newcommand{\llama}{\textsc{Llama-3.1-8B-Instruct}\xspace}

\newtcolorbox{promptbox}[1][]{%
  enhanced, breakable,
  colback=gray!5, colframe=gray!60!black,
  boxrule=0.5pt, arc=2pt,
  left=4pt, right=4pt, top=3pt, bottom=3pt,
  fonttitle=\bfseries\small, coltitle=white, colbacktitle=gray!60!black,
  fontupper=\ttfamily\footnotesize,
  #1} %
\graphicspath{ {figures/} }
\usepackage{tikz}
\usepackage[eulergreek]{sansmath}
\usepackage{pgfplots}
\usepackage{pgfplotstable}
\usepackage{cleveref}
\usepackage{comment}
\usepackage{siunitx}

\pgfplotsset{compat=newest}
\usepgfplotslibrary{external,units,colorbrewer,groupplots,fillbetween,statistics}
\usetikzlibrary{patterns,shapes.misc}

\pgfplotsset{compat=newest,
    width=\linewidth, height=3.5cm,
    every axis/.append style={
     semithick,
     ymajorgrids=true,
    },
    legend style={
     draw=none,
     fill=white,
     anchor=south,
     at={(0.5, 1)},
     font=\small,
    },
	{cycle list/Set1},
    cycle list name=Set1,
    cycle multiindex* list={
    	Set1\nextlist
    	{dashed,solid,dotted,dashdotted,densely dashdotdotted}\nextlist
    	mark list*\nextlist
    },
    legend cell align=left,
    xtick align=outside,
    xtick pos=bottom,
    major tick length=1mm,
    /pgfplots/short line legend/.style={
     legend image code/.code={
         \draw[mark repeat=2,mark phase=2,##1]
             plot coordinates {
                 (0cm,0cm)
                     (0.2cm,0cm)
                     (0.4cm,0cm)
             };%
     }
    },
    /pgfplots/ybar legend/.style={
    /pgfplots/legend image code/.code={%
         \draw[##1,/tikz/.cd,yshift=-0.25em]
             (0cm,0cm) rectangle (4pt,6pt);},
    },
    /pgfplots/xbar legend/.style={
    /pgfplots/legend image code/.code={%
         \draw[##1,/tikz/.cd,yshift=-0.25em]
             (0cm,0cm) rectangle (4pt,6pt);},
    },
    unit code/.code={\si{#1}},
    tick label style = {font=\sansmath\sffamily},
    every axis label = {font=\sffamily},
    legend style = {font=\sffamily},
    label style = {font=\sffamily},
    select coords between index/.style 2 args={
    	x filter/.code={
    		\ifnum\coordindex<#1\fi
    		\ifnum\coordindex>#2\fi
    }},
	discard if not/.style 2 args={
		x filter/.code={
			\edef\tempa{\thisrow{#1}}
			\edef\tempb{#2}
			\ifx\tempa\tempb
			\else
			
			\fi
	}}
}

\pgfplotsset{table/search path={plots/export/data}}

\crefname{assumption}{assumption}{assumptions}

\tikzsetexternalprefix{figures/}
\tikzset{external/mode=list and make}

\makeatletter
\begingroup\endlinechar=-1\relax
\everyeof{\noexpand}%
\edef\x{\endgroup\def\noexpand\homepath{%
		 }}\x
\makeatother

\def\overleafhome{/tmp}
\newcommand{\inputplot}[2]{%
	\ifx\homepath\overleafhome%
	\IfBeginWith{#1}{plots}{\includegraphics{main-figure#2.pdf}}{#1}%
	\else%
	{\sffamily\scriptsize\input{#1}}
	\fi
}

\newcommand{\newgroupwidth}[2]%
{\expandafter\xdef\csname groupwidth#1\endcsname{#2}}

\newcounter{groupwidth}
\newsavebox{\groupwidthbox}
\makeatletter
{\edef\groupnumber{#1}%
	\stepcounter{groupwidth}%
	\@ifundefined{groupwidth\thegroupwidth}{\pgfmathsetlengthmacro{\mywidth}{\linewidth/\groupnumber}}%
	{\expandafter\let\expandafter\mywidth\csname groupwidth\thegroupwidth\endcsname}%
	\begin{lrbox}{\groupwidthbox}%
		\tikzset{/pgfplots/width={\mywidth}}%
		\ignorespaces}%
	{\end{lrbox}%
	\usebox\groupwidthbox
	\pgfmathsetlengthmacro{\mywidth}{\mywidth + (\linewidth - \wd\groupwidthbox)/\groupnumber}
	\immediate\write\@auxout{\string\newgroupwidth{\thegroupwidth}{\mywidth}}}
\makeatother
\usepackage{amsmath,amssymb,amsfonts,amsthm}
\usepackage{physics}
\usepackage{enumitem}
\usepackage{bbm}
\theoremstyle{definition}

\theoremstyle{remark}

\newtheorem{theorem}{Theorem}
\newtheorem{lemma}[theorem]{Lemma}
\newtheorem{corollary}[theorem]{Corollary}

\allowdisplaybreaks

\newcommand{\maptab}[1]{\mathcal{M}_{#1}}   %
\newcommand{\recvtab}[2]{\mathcal{M}_{#1\to #2}} 
\begin{document}

\title{Robust Decentralized Fairness Auditing}
\makeatletter
\newcommand{\linebreakand}{%
  \end{@IEEEauthorhalign}
  \hfill\mbox{}\par
  \mbox{}\hfill\begin{@IEEEauthorhalign}
}
\makeatother
\author{
  \IEEEauthorblockN{Sayan Biswas}
  \IEEEauthorblockA{\textit{EPFL}, Switzerland\\
    sayan.biswas@epfl.ch}
  \and
  \IEEEauthorblockN{Jade Garcia Bourrée}
  \IEEEauthorblockA{\textit{EPFL}, Switzerland\\
    jade.garcia-bourree@epfl.ch}
  \and
  \IEEEauthorblockN{Anne-Marie Kermarrec}
  \IEEEauthorblockA{\textit{EPFL}, Switzerland\\
    anne-marie.kermarrec@epfl.ch}
  \linebreakand
  \IEEEauthorblockN{Palak}
  \IEEEauthorblockA{\textit{EPFL}, Switzerland\\
    palak.palak@epfl.ch}
  \and
  \IEEEauthorblockN{Martijn de Vos}
  \IEEEauthorblockA{\textit{EPFL}, Switzerland\\
    martijn.devos@epfl.ch}
}

\maketitle

\begin{abstract}
Emerging legislation requires \acp{LLM} to be audited to assess their compliance with regulatory standards, particularly with respect to fairness.
Such black-box audits typically assume a single auditor with access to a large, representative set of queries.
In practice, it can be difficult for an auditor to obtain such a query set, but multiple auditors can together cover the relevant demographic groups by auditing the \ac{LLM} collaboratively with their individual query sets.
However, relying on multiple auditors raises a fundamental trust problem, as they may act on behalf of the \ac{LLM} provider to portray a misleading appearance of fairness, \ie fairwashing.
We propose \sys, a novel approach for robust decentralized fairness auditing. 
In \sys, auditing proceeds in rounds without a central server.
In each round, every auditor issues a fixed number of queries to the \ac{LLM}, and sends only cumulative statistics vectors of its query results to other auditors instead of sensitive queries in clear.
The fairness of the audited \ac{LLM} is then estimated by aggregating all the vectors together.
We show both theoretically and empirically that even a single adversary in the network can steer this estimate by fabricating the vectors it sends, making an unfair \ac{LLM} appear fair.
To address this threat, \sys relies on a defense mechanism where each honest auditor locally down-weights any auditor whose cumulative statistics vectors are statistically inconsistent with previous ones.
We implement \sys and compare it to robust aggregation baselines on two datasets with two pre-trained \acp{LLM}.
Against an attacker that optimizes the vectors it sends to make the \ac{LLM} appear fair, \sys reduces audit error by up to 78\% on average relative to no defense and at least 62\% relative to the robust aggregation baselines. Even when 49\% of auditors are adversarial, \sys never lets a very unfair or moderately unfair LLM pass as fair.
\end{abstract}

\begin{IEEEkeywords}
fairness auditing, black-box auditing, large language models, decentralized auditing, fairwashing, robustness.
\end{IEEEkeywords}

\section{Introduction}
\label{sec:intro}

\Acp{LLM} are increasingly used as classifiers in high-stakes applications, including content moderation~\cite{kumar2024watch}, clinical decision support~\cite{thirunavukarasu2023large}, hiring~\cite{armstrong2024silicon}, financial risk assessment~\cite{golec2025interpretable}, and legal document analysis~\cite{guha2023legalbench}.
In these settings, an \ac{LLM}'s decisions directly affect individuals, and systematic disparities in how demographic groups are treated by the \ac{LLM} can cause real harm, ranging from lost economic opportunities (\eg, unjustified loan denials) to personal consequences (\eg, a missed medical diagnosis)~\cite{mcgregor2021preventing}. 
Emerging legislation, such as the EU AI Act~\cite{euaiact2024} and New York City's Local Law~144 on automated employment decision tools~\cite{nycdcwp_aedt}, therefore requires such systems to be audited for compliance, in particular for fairness across demographic groups.
However, these \acp{LLM} are typically proprietary and served through APIs, so auditors have access to neither the \ac{LLM}'s weights nor its training data. They typically only have \emph{black-box access}, \ie they can only submit queries and observe the \ac{LLM}'s decisions~\cite{casper2024black}.
Fairness must therefore be audited in such black-box settings by querying the \ac{LLM} and measuring disparities in its outputs~\cite{armstrong2024silicon}.%

In a black-box fairness audit, a single auditor, such as a regulator, assembles a set of queries, submits them to the \ac{LLM}, and estimates a fairness metric from the \ac{LLM}'s decisions.
A standard choice, and the one we focus on in this work, is \emph{\ac{DP}}~\cite{calders2009building, dwork2012fairness} that measures the difference between the rates at which two demographic groups receive a positive decision from the \ac{LLM}. The reliability of this estimate heavily depends on the query set, which must meet two requirements.
First, it must be \emph{large}: fairness metrics compare decision rates across groups (\eg, male and female), and estimating them precisely requires many queries per group~\cite{ji2020can}.
Second, it must be \emph{representative} of the population the \ac{LLM} serves in deployment; otherwise, the audit measures fairness on the wrong distribution and can over- or underestimate the disparity~\cite{ding2021retiring}.
Meeting both requirements is difficult for a single auditor.
Realistic inputs, such as patient records or job applications, are sensitive and protected by data-protection regulations; public benchmarks are not necessarily representative; and every query to a commercial API has a cost~\cite{andrus2021we}.
In practice, the relevant audit queries are typically spread across many parties, such as hospitals, employers, or civil-society organizations, each holding only a small, possibly unrepresentative query set~\cite{rieke2020future}.

\begin{figure}[t]
    \centering
    \includegraphics[width=.75\linewidth]{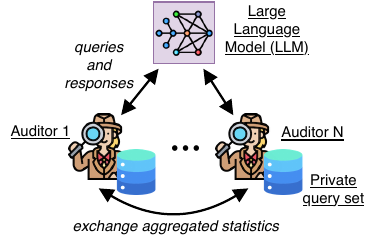}
    \caption{A collaborative fairness audit of a \acf{LLM}.}
    \label{fig:setting}
\end{figure}

Individual auditors, however, can \emph{collaborate} to effectively conduct a fairness audit of an \ac{LLM}.
By auditing the \ac{LLM} collaboratively, auditors can estimate fairness over the union of their query sets, which is far larger and more representative than any single auditor's, while sharing the cost of querying.
This addresses both requirements.
Sharing raw queries, however, is not an option due to their sensitivity.
Fortunately, auditors need not send their queries at all. Group fairness metrics such as \ac{DP} depend only on aggregate statistics, so each auditor can query the \ac{LLM} with its own queries and send only these aggregated statistics to other auditors. %
We show an example of this setting in \Cref{fig:setting} where $N$ auditors audit the fairness of an \ac{LLM} and send aggregated statistics.

Collaborative fairness audits, however, raise two challenges.
First, the auditor query sets are \emph{heterogeneous} in terms of size, groups, and predicted labels. Each query set reflects its own sub-population. Therefore, local estimates may  widely differ from each other~\cite{ding2021retiring} and from the population-level metric, making aggregation of local fairness values inaccurate.
Second, some auditors may be \emph{adversarial}. A participant colluding with the \ac{LLM} provider can send fabricated numbers that steer honest auditors toward a verdict of fairness, a form of fairwashing~\cite{aivodji2019fairwashing}. 
Such adversaries may have a stake in the audit outcome, as platform employees, contractors, or paid affiliates, or because they benefit, economically or ideologically, from the discrimination going undetected.
These two challenges compound each other. Robust aggregation methods typically discard contributions that deviate from the majority, but under heterogeneity, honest auditors with unusual sub-populations also deviate, so discarding outliers also discards the data that makes the audit representative.

We propose \sys, a novel approach for robust decentralized fairness auditing.
Rather than aggregating all statistics in a single exchange, \sys proceeds in multiple rounds.
Each round, every auditor issues a fixed number of queries to the \ac{LLM}, and sends only statistics vectors to a few other auditors, never the sensitive queries themselves.
This round-based structure bounds how much any single auditor can contribute per round, thus limiting the effect an adversary can have in influencing the round-wise estimate. Moreover, it gives each auditor a history of every other auditor's statistics vectors to check new vectors against. This enables \sys to catch subtle fairwashing that a single round would miss (see \Cref{sec:colab_auditing_prelim}).
Each auditor then estimates the \ac{LLM}'s fairness by aggregating all the vectors it has received.
To stress the importance of robustness, we introduce a strong adaptive adversary that attempts fairwashing by fabricating, every round, the statistics vectors that bring the estimated \ac{DP} as close to $0$ as possible. We design a new defense against this threat that is central to \sys.
Rather than trusting all auditors equally, each honest auditor locally down-weights any peer whose per-round reports are statistically inconsistent with its own past reports. An honest auditor, by contrast, samples uniformly from a fixed query set, so its increments remain consistent over time.

We implement and evaluate \sys on two datasets against robust aggregation baselines.
\sys prevents fairwashing of  very unfair and moderate \ac{LLM} on both datasets, even when 49\% of the auditors are adversarial, and reduces the error of the estimated \ac{DP} by up to 78\% on average relative to no defense and by at least 62\% relative to the robust aggregation baselines.
Near-fair \acp{LLM} remain the hardest to protect, since a small shift of the estimated \ac{DP} suffices to flip their verdict, yet \sys still at least halves this error.\sayan{same marketing ++ here as in the abstract}

\textbf{Contributions.} We make the following four contributions:
\begin{enumerate}
  \item We introduce a novel round-based decentralized fairness auditing mechanism, named \sys, in which auditors query the \ac{LLM} with their own private queries and only exchange cumulative statistics vectors with their neighbors (\Cref{sec:design}).
  \item We introduce a strong adaptive adversary that fairwashes the \ac{LLM} by optimizing its fabricated statistics vectors every round, and we theoretically characterize the conditions under which a single adversary can steer the estimate so that an unfair \ac{LLM} appears fair (\Cref{sec:adversary}). %
  \item We design a defense that is central to \sys: each honest auditor locally down-weights any auditor whose statistics vectors are statistically self-inconsistent (\Cref{sec:defense}).
  It requires no trusted party, and unlike robust aggregation techniques, it does not discard honest auditors whose query sets are atypical.
  \item We evaluate \sys on two real-world datasets against robust aggregation baselines (\Cref{sec:experiments}).
    \sys effectively neutralizes the threat from adaptive adversaries present in the network.
    With 23\% adversaries in the network, the error of the estimated \ac{DP} of \sys stays between $0.02$ and $0.05$ as the number of honest auditors and the heterogeneity of their query sets vary, while it reaches up to $0.26$ for robust aggregation baselines, at negligible communication and compute cost. Even with 49\% of the auditors being adversarial, it never lets a very unfair or moderately unfair LLM pass as fair. %
\end{enumerate} %
\section{Background and Problem Description}
\label{sec:background}

We first introduce black-box and collaborative fairness auditing, and then formulate our research problem.

\subsection{Black-box Fairness Auditing}
\label{sec:bb_prelim}
The goal of a fairness audit is to verify whether a deployed \ac{LLM} treats different demographic groups equitably, for example, to assess its compliance with regulation.
Such an audit involves three entities:
\begin{enumerate*}[label=(\textit{\roman*})]
  \item the \emph{platform}, which hosts the \ac{LLM} and uses it as a classifier;
  \item the \emph{users}, who interact with the service offered by the platform; and
  \item the \emph{auditor}, which conducts the audit to verify whether the \ac{LLM} is fair across all users.
\end{enumerate*}
The auditor can be, for example, a state regulator, a consulting firm, or a group of users.
In a black-box audit, the auditor has no access to the \ac{LLM}'s internals, such as its architecture, weights, or training data. Regulations such as the EU AI Act ~\cite{euaiact2024} do require platforms to disclose information about their systems, \eg technical documentation or transparency reports.
However, this information is self-reported and cannot be independently verified by the auditor, and it may be incomplete or inconsistent with the platform's actual behavior~\cite{trujillo2025dsa, garcia2025robust}.
Following~\cite{garcia2025robust}, we therefore treat the query-answer pairs collected by the auditor as the only trusted source of information, and disregard any public or platform-provided information about the system.

\textbf{Fairness metric.} In this work, we consider black-box audits targeting the \textit{fairness} of an \ac{LLM} $f$.
Among all fairness metrics, we study \textit{Demographic Parity (DP)} \cite{calders2009building, dwork2012fairness} which is commonly used in the fairness evaluation literature.
We consider an \ac{LLM} used as a binary classifier.
For an input $X$, it outputs a decision $f(X) \in \{0, 1\}$, where $f(X) = 1$ denotes a positive decision (\eg, a comment flagged as toxic).
Each input belongs to one of two demographic groups $\gA$ and $\gB$, defined by a sensitive attribute such as religion or gender. A multi-class or multi-group classifier reduces to this setting by auditing one class against all others, as we do in \Cref{sec:experiments}.
For an input $X$ drawn from the population served by $f$, the DP of $f$ is defined as follows:
\begin{equation}\label{eq:estimand}
  \DP(f) = \mathbb{P}(f(X) = 1 \mid X \in \gA)
        - \mathbb{P}(f(X) = 1 \mid X \in \gB) .
\end{equation} 
We focus on DP as it only depends on the \ac{LLM}'s outputs, and can thus be estimated from query-answer pairs alone.
Nevertheless, our approach only relies on the fact that the metric is a difference between conditional probabilities of the \ac{LLM}'s output, and therefore extends to other group fairness metrics with the same structure.
For instance, Equality of Opportunity~\cite{hardt2016equality} is obtained by additionally conditioning on $Y = 1$ in Eq.~\eqref{eq:estimand}, and Equalized Odds~\cite{hardt2016equality} by considering both $Y = 1$ and $Y = 0$, provided that the auditor knows the ground-truth labels $Y$ of its queries.

\textbf{Fairness verdict.}
The auditor estimates $\DP(f)$ from its query-answer pairs and outputs a \emph{verdict}.
The \ac{LLM} is deemed \emph{fair} if the estimate $\widehat{\DP}$ lies in the \emph{fair band} $[-\tau, \tau]$, \ie, $|\widehat{\DP}| \le \tau$ for a tolerance $\tau$ set by the entity commissioning the audit, and \emph{unfair} otherwise.

\begin{figure}[t]
    \centering
    \includegraphics[width=.65\linewidth]{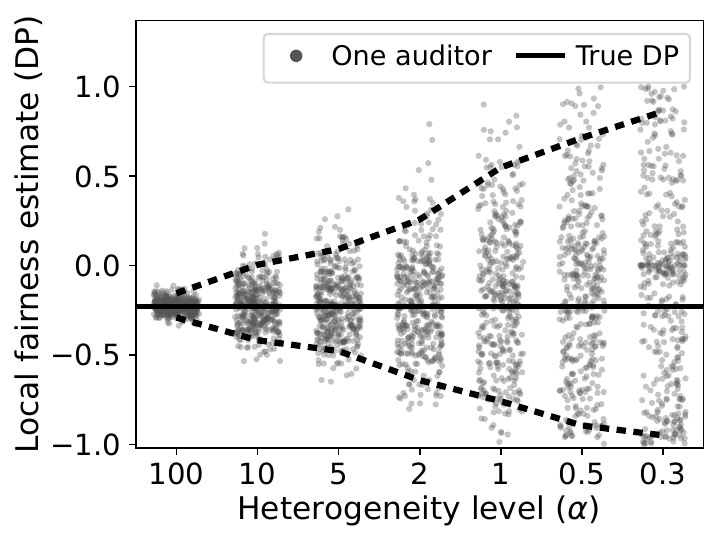}
    \caption{Local \ac{DP} estimations of $N = 20$ honest auditors on the \civil dataset as data heterogeneity grows (lower $\alpha$ = more heterogeneous).
    The solid horizontal line indicates the true \ac{DP} and the dotted lines indicate the 5th and 95th percentiles. Heterogeneity of private query sets makes local fairness estimates unreliable.}
    \label{fig:collaborative-auditing}
\end{figure}

\subsection{Collaborative Fairness Auditing}
\label{sec:colab_auditing_prelim}
As discussed in \Cref{sec:intro}, a single auditor rarely holds a query set that is large and representative of the population.
We therefore consider $N$ auditors, all honest for now, where each auditor $i$ holds a private query set $\mathcal{D}_i$ drawn from its own sub-population, such as the patients of a hospital or the applicants of an employer.
In this collaborative setting, each auditor $i$ queries the \ac{LLM} with its own queries in $\mathcal{D}_i$ and summarizes the resulting decisions into a \emph{\ac{LAS vector}} $c_i = (a_i, A_i, b_i, B_i)$, where $A_i$ is the number of queries from group~$\gA$, $a_i$ the number of those receiving a positive decision, and $B_i$ and $b_i$ their group-$\gB$ counterparts.
For any such vector $c = (a, A, b, B)$, the \ac{DP} of the queries is:
\begin{equation}
\label{eq:local_dp}
\DP(c) = \frac{a}{A} - \frac{b}{B}.
\end{equation}
Applied to its own \ac{LAS vector}, this gives auditor~$i$ its \emph{local} estimate $\DP(c_i)$.
We also apply it below to sums of \acp{LAS vector}.
Crucially, \ac{DP} depends only on these \acp{LAS vector}, so auditors can collaborate by exchanging them rather than their queries.

\textbf{Summing \acp{LAS vector} under heterogeneity.}
Heterogeneity makes local \ac{DP} estimates unreliable.
Figure~\ref{fig:collaborative-auditing} shows the local \ac{DP} estimates of 20 honest auditors on the \civil dataset, with \qwen as the \ac{LLM} classifier, when the data is split across auditors with a Dirichlet distribution of concentration~$\alpha$ (the smaller~$\alpha$, the more auditors differ in group composition and positive rate).
With near-IID data ($\alpha = 100$), all local estimates lie close to the true DP.
As $\alpha$ decreases, they spread out. The gap between the 5th and 95th percentiles widens from $0.14$ at $\alpha = 100$ to $1.81$ at $\alpha = 0.3$, and many auditors even observe a disparity with the opposite sign.
Some auditors may still land on the correct verdict, but only by chance. From its own query set alone, an auditor cannot tell whether its local estimate reflects the \ac{LLM}'s disparity or merely the composition of its query set. Combining the local estimates, \eg by their mean or median, does not solve this.
It weights all auditors equally in both groups, although their proportions of each group differ.
When group proportions differ across sub-populations, rates computed per sub-population and on the aggregated data can disagree, and even point in opposite directions (Simpson's paradox)~\cite{bickel1975sex}.
Instead, auditors can sum their \acp{LAS vector} into a single \emph{\ac{GAS vector}} $C = \sum_i c_i$ and compute a single estimate from it:
\begin{equation}
\begin{aligned}
  \DP(C) &= \frac{\sum_i a_i}{\sum_i A_i} - \frac{\sum_i b_i}{\sum_i B_i}\\
         &= \sum_i \frac{A_i}{\sum_j A_j}\,\frac{a_i}{A_i}
          - \sum_i \frac{B_i}{\sum_j B_j}\,\frac{b_i}{B_i}.
\end{aligned}
\label{eq:pooled_dp}
\end{equation}
This weights each auditor's positive rate in a group by that auditor's proportion of the group.
It is equivalent to a single auditor holding the union of all query sets, and thus recovers the DP of this union exactly, whatever the data heterogeneity.
Collaborative audits should therefore estimate DP from the \ac{GAS vector}, and \sys does so.

\begin{figure}[t]
    \centering
    \includegraphics[width=0.65\linewidth]{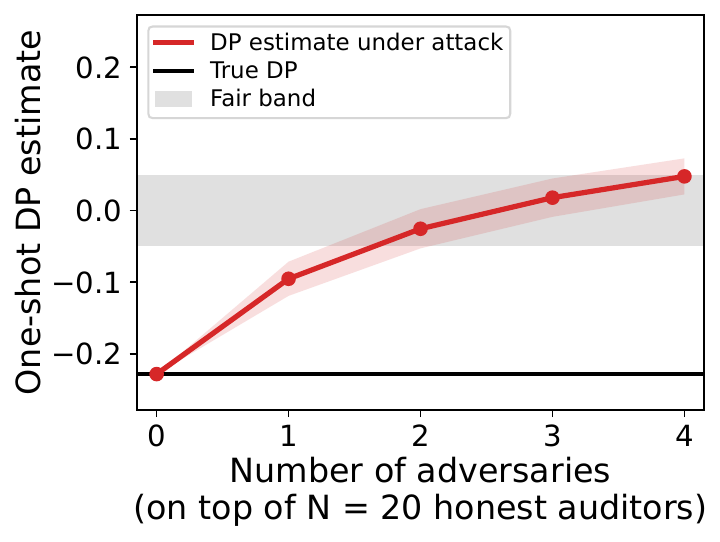}
    \caption{One-shot auditing on the \civil dataset (Christian vs.\ Muslim, true DP $-0.228$), with $N = 20$ honest auditors holding heterogeneous data ($\alpha = 1$). Each adversary submits fabricated counts as large as the largest honest auditor's, all non-toxic comments about group $\mathsf{B}$. Mean $\pm$ SD over 20 seeds.}
    
    \label{fig:one_shot_audit}
\end{figure}

\textbf{Naive collaborative auditing (one-shot).}
In the simplest collaborative protocol, each auditor queries the \ac{LLM} with its full query set once, sends its \ac{LAS vector} to others, sums all \acp{LAS vector} it receives into a \ac{GAS vector}, and computes \Cref{eq:pooled_dp} on it.
This \emph{one-shot} protocol, as used in prior work~\cite{de2024fairness}, is accurate when all auditors are honest, but extremely fragile otherwise.
An adversary colluding with the platform can send a fabricated \ac{LAS vector} with arbitrarily large $A_i$ and $B_i$, and thereby steer the estimate to any value, including one such that the \ac{LLM} appears fair.
This vulnerability comes from two facts:
\begin{enumerate*}[label=(\textit{\roman*})]
\item honest auditors cannot verify another auditor's \ac{LAS vector}, as they never see its queries; and 
\item nothing bounds the size of a single contribution.
\end{enumerate*}
\Cref{fig:one_shot_audit} illustrates this on the \civil dataset for an audit with $N=20$ honest auditors.
Even two adversaries in a setting with 22 total auditors can move the estimate from $-0.23$ (the true \ac{DP}) into the fair band, and a third pushes it past 0.
They thus control the estimate rather than merely bias it.
Their \acp{LAS vector} are also plausible, since heterogeneous data can produce a group-B-only query set (\Cref{fig:collaborative-auditing}), so nothing distinguishes them from honest ones in a single exchange.
Standard robust aggregation techniques do not solve this.
Methods such as the median or trimmed mean~\cite{yin2018byzantine} discard contributions that deviate from the majority, but under heterogeneity honest auditors with unusual sub-populations deviate too, so they discard the very data that makes the audit representative.
This is why \sys proceeds in rounds with a fixed per-round budget $n$ (\Cref{sec:design}).

\subsection{Problem Description}
\label{sec:problem_description}

In summary, we seek a decentralized auditing protocol in which every honest auditor
\begin{enumerate*}[label=(\roman*)]
  \item estimates the \ac{DP} of the honest auditors' \ac{GAS vector} accurately, even under heterogeneity;
  \item reaches the correct fair/unfair verdict despite adversaries that fabricate their \acp{LAS vector} to fairwash the \ac{LLM};
  \item does not rely on a trusted party; and
  \item sends only \acp{LAS vector}, never its (possibly sensitive) queries. %
\end{enumerate*}

\section{Design of \sys}
\label{sec:design}

In this section, we give a detailed overview of \sys, our novel approach for robust decentralized fairness auditing.

\begin{figure*}[t]
    \centering
    \includegraphics[width=.92\linewidth]{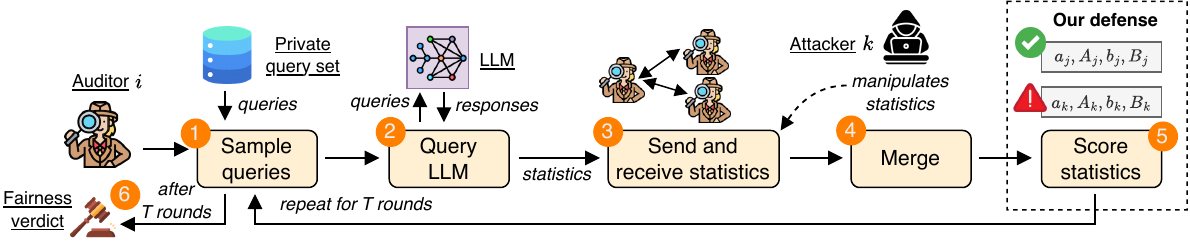}
    \caption{The workflow of \sys, from the perspective of an honest auditor $i$.}
    \label{fig:workflow}
\end{figure*}

\subsection{\textsc{Auditopus} in a Nutshell}
\label{sec:nutshell}

In \sys, a set of auditors $\mathcal{V}$ collaboratively audit a black-box \ac{LLM} over several rounds.
In each round, every auditor queries the \ac{LLM} with a few samples from its private query set.
It then sends only its \ac{LAS vector}, \ie the number of queries and positive predictions per group, never the queries themselves.
Each auditor sends its \ac{LAS vector} to a select few other auditors, and every auditor sums all the \acp{LAS vector} it has received into a \ac{GAS vector} to estimate the \ac{LLM}'s \ac{DP}. The auditing algorithm is described in \Cref{sec:protocol}.

Using the \ac{GAS vector} makes the estimate accurate even when auditors hold very different query sets.
However, it also lets an attacker fabricate \acp{LAS vector} to make an unfair \ac{LLM} look fair.
We elaborate on this threat in \Cref{sec:adversary}.
As a defense, \sys therefore checks each auditor against its own past \acp{LAS vector} rather than against the others.
An honest auditor samples from fixed query sets, so its \ac{LAS vector} stays consistent over rounds.
An attacker must keep adjusting its \acp{LAS vector} to cancel the honest ones.
Each honest auditor then gives less weight to increments that are inconsistent with the source's past \ac{LAS vector}.
We describe this defense in detail in \Cref{sec:defense}.

\subsection{System and Threat Model}
\label{sec:system_model}

\textbf{Auditors and rounds.}
A set of auditors $\mathcal{V}$, of which $N$ are honest and $K$ are adversarial, audits an \ac{LLM} $f$ for \ac{DP} between two groups $\gA$ and $\gB$.
Each honest auditor $i$ holds a private query set $\mathcal{D}_i$, and these query sets are heterogeneous in size and group composition (\Cref{sec:colab_auditing_prelim}).
The audit proceeds in synchronous rounds $t = 1, \dots, T$.

\textbf{Audit goal.}
We assume that the honest auditors' query sets jointly form the population served by $f$, so that the DP of their GAS vector, \ie \eqref{eq:pooled_dp} computed over honest auditors only, equals $\mathrm{DP}(f)$. At the end of the audit, each honest auditor outputs an estimate of $\mathrm{DP}(f)$ and the corresponding verdict. The goal is for every honest auditor to reach the same verdict as $\mathrm{DP}(f)$.
How the individual outcomes are combined into a single audit outcome is left to the entity commissioning the audit, \eg, a regulator.
In \Cref{sec:experiments} we consider the median among all estimates.

\textbf{Network model.}
A central aggregator could collect all \acp{LAS vector} and compute \Cref{eq:pooled_dp}, but it would become a single point of trust and failure. Auditors from different organizations or jurisdictions may not agree on a trusted party, and a compromised aggregator could dictate the audit's outcome.
We therefore consider a decentralized setting, in which auditors communicate over a static, connected graph $\mathcal{G}$ and exchange \acp{LAS vector} only with their neighbors $\mathcal{N}(i)$.
The network is permissioned. Auditors are approved before joining and know each other's identities, so an adversary cannot join under many identities (\ie a Sybil attack~\cite{douceur2002sybil}).
Auditors stay online during the audit, and we assume reliable communication channels.
Each auditor digitally signs its own \ac{LAS vector} together with the round in which it produced it, and auditors relay these signed entries unchanged, so vectors relayed by other auditors cannot be altered or forged.

\textbf{Threat model.}
Adversaries aim at \emph{fairwashing}~\cite{aivodji2019fairwashing}, trying to make an unfair \ac{LLM} appear fair, \eg, on behalf of the platform (\Cref{sec:intro}). An adversary follows the protocol and its message format, but sends fabricated \acp{LAS vector} instead of those of its own queries, and relays the \acp{LAS vector} of other auditors unchanged. Each adversary colludes with the platform, but adversaries do not collude with one another: they do not share their query sets, their \acp{LAS vector}, or their strategies, and each one crafts its reports on its own. Finally, we assume that the platform answers audit queries like regular traffic, since they come from the auditors' private query sets and cannot be distinguished from those of regular users~\cite{godinot2024under}. In particular, the platform cannot single out the honest auditors' queries to infer their statistics. We formalize the adversary's capabilities and strategy in \Cref{sec:adversary}.

In \sys, auditors exchange only \acp{LAS vector}, their own and those they relay, but never their queries. This is intuitively more private than sharing queries in clear. Deployments that need stronger, formal privacy guarantees can perturb each \ac{LAS vector} with a differentially private mechanism~\cite{dwork2006calibrating} before it is sent, or run \sys' checks and aggregations under secure multi-party computation~\cite{bonawitz2017practical}. These remain orthogonal to how \sys fundamentally works.

\begin{algorithm}[t]
\caption{The \sys workflow, as executed by honest auditor $i$.}
\small
\label{alg:sys}
\begin{algorithmic}[1]
\Require query set $\mathcal{D}_i$; auditor set $\mathcal{V}$; neighbors $\mathcal{N}(i)$; per-round query budget $n$; rounds $T$; score cap $z_{\max}$; fair band $\tau$; black-box \ac{LLM} $f$
\State $c_i \gets (0,0,0,0)$ \Comment{Initialize \ac{LAS vector} of $i$}
\State $\maptab{i}[j] \gets \big((0,0,0,0), 0\big)$;\ \ $\mathit{prev}_j, W_j \gets (0,0,0,0)$ for every $j \in \mathcal{V}$
    \Comment{table and \acp{LAS vector} of all auditors}
\For{$t = 1, \dots, T$}
    \State $Q \sim \mathcal{D}_i$, $|Q| = n$;\ \ $\mathcal{D}_i \gets \mathcal{D}_i \setminus Q$ \Comment{Step 1: sample}
    \State $c_i \gets c_i + \Call{QueryLLM}{f, Q}$ \Comment{Step 2: query \ac{LLM}}
    \State $\maptab{i}[i] \gets (c_i, t)$ \Comment{\ac{LAS vector} and its round}
    \State \textbf{Send} $\maptab{i}$ to every $u \in \mathcal{N}(i)$ \Comment{Step 3: send and recv.}
    \State \textbf{Receive} table $\recvtab{u}{i}$ from each $u \in \mathcal{N}(i)$
    \For{each $u \in \mathcal{N}(i)$}
        \Comment{Step 4: merge}
        \For{each $j \in \mathcal{V}$}
            \If{$\recvtab{u}{i}[j].r > \maptab{i}[j].r$}
                \State $\maptab{i}[j] \gets \recvtab{u}{i}[j]$ \Comment{keep the newest vector}
            \EndIf
        \EndFor
    \EndFor
    \For{each $j \in \mathcal{V}$}
        \Comment{Step 5: score}
        \State $\delta \gets \maptab{i}[j].c - \mathit{prev}_j$
            \Comment{$\delta = (\Delta a, \Delta A, \Delta b, \Delta B)$}

            \If{$\Delta A + \Delta B > 0$}
        \If{$\mathit{prev}_j = (0,0,0,0)$}
            \State $w \gets 1$
                \Comment{first increment: no history}
        \Else
            \State $S \gets \Call{Suspicion}{\delta, \mathit{prev}_j}$
            \State $w \gets (1+S)^{-2}$
        \EndIf
        \State $W_j \gets W_j + w\,\delta$
    \EndIf
        \State $\mathit{prev}_j \gets \maptab{i}[j].c$
    \EndFor
    \State $\widehat{\DP}_i \gets \DP\big(\sum_{j \in \mathcal{V}} W_j\big)$ \Comment{Step 6: estimate}
\EndFor
\State \Return \emph{fair} if $|\widehat{\DP}_i| \le \tau$, else \emph{unfair}
\end{algorithmic}
\end{algorithm} 
\subsection{\sys Auditing Workflow}
\label{sec:protocol}
We now describe the workflow of \sys from the perspective of an honest auditor $i$, as illustrated in \Cref{fig:workflow} and detailed in \Cref{alg:sys}.

\textbf{State.}
Each auditor $i$ maintains a \ac{LAS vector} $c_i$, which sums the statistics of all queries it has issued so far.
In addition, it maintains a table $\maptab{i}$ that maps every auditor $j \in \mathcal{V}$ to the latest \ac{LAS vector} that $i$ has received from $j$, stored as an entry $\maptab{i}[j] = (c, r)$ signed by $j$, where $\maptab{i}[j].c$ is the \ac{LAS vector} and $\maptab{i}[j].r$ the round in which $j$ produced it.
To score incoming statistics, $i$ also keeps, for every auditor $j$, the \ac{LAS vector} $\mathit{prev}_j$ that it held for $j$ at the end of the previous round, and a weighted \ac{LAS vector} $W_j$ that accumulates $j$'s contributions to $i$'s estimate.
All vectors are initialized to zero, and all entries to round $0$ (Lines~1-2).
Each round then consists of the following steps.

\textbf{Step 1: Sample queries.}
At the start of round $t$, auditor $i$ draws a set $Q$ of $n$ queries uniformly at random from $\mathcal{D}_i$ and removes them from its query set, so that no query is issued twice (Line~4).

\textbf{Step 2: Query the \ac{LLM}.}
Auditor $i$ submits the queries in $Q$ to the black-box \ac{LLM} $f$, records the decisions, and adds the resulting statistics to its \ac{LAS vector} $c_i$. It then stores $c_i$ in its own entry $\maptab{i}[i]$, signed and stamped with the current round $t$ (Lines~5-6).

\textbf{Step 3: Send and receive statistics.}
Auditor $i$ sends its entire table $\maptab{i}$ to each of its neighbors in $\mathcal{G}$, and receives the table $\recvtab{u}{i}$ of each neighbor $u \in \mathcal{N}(i)$ in return (Lines~7-8).
Auditors thus relay the (signed) entries of other auditors in addition to their own, so the statistics of each auditor reach every other auditor, advancing one hop per round.
Only \acp{LAS vector} are exchanged, never queries.

\textbf{Step 4: Merge.}
Auditor $i$ merges each received table into its own, entry by entry. It replaces $\maptab{i}[j]$ with the received entry $\recvtab{u}{i}[j]$ if the latter was produced in a later round and carries a valid signature of $j$ (Lines~9-15).
Relaying whole signed entries rather than per-round statistics makes the merge robust to gossip.
An auditor may receive the same entry along several paths or with a delay of several rounds, and comparing rounds neither aggregates it twice nor lets an older vector overwrite a newer one.

\textbf{Step 5: Score statistics.}
Without a defense, $i$ would aggregate the latest vectors of all auditors into $\sum_{j} \maptab{i}[j].c$, whose \ac{DP} equals $\mathrm{DP}(f)$ once all statistics have arrived, provided every auditor is honest (\Cref{sec:colab_auditing_prelim}).
As we show in \Cref{sec:adversary}, however, a malicious auditor can fabricate its statistics to steer this estimate into the fair band.
\sys therefore weighs the statistics of each auditor before summing them.
For each auditor $j$, including $i$ itself, $i$ computes the \emph{increment} $\delta = \maptab{i}[j].c - \mathit{prev}_j$, \ie, the statistics that $j$ added since $i$ last updated its entry (Line~17).
When $j$ sends new \acp{LAS vector}, $i$ assigns $\delta$ a suspicion score $S$ that measures how statistically inconsistent $\delta$ is with $j$'s \ac{LAS vector} $\mathit{prev}_j$, and adds $\delta$ to $W_j$ with weight $w = (1+S)^{-2}$ (Lines~18-25). The first increment of $j$ has no history to compare with, so $i$ gives it weight $1$ without scoring it (Lines~19-20).
The intuition is that an honest auditor samples uniformly from a fixed query set, so its increments remain consistent with its own \ac{LAS vector}, whereas an attacker must keep adjusting its fabricated statistics to counter the honest ones. %
Since each auditor is compared to its own \ac{LAS vector} rather than to the other auditors, honest auditors whose query sets differ from the majority are not penalized.
The weight of an increment is fixed once assigned and never revised.
We detail the suspicion score and the weighting in \Cref{sec:defense}.

\textbf{Step 6: Estimate and verdict.}
Auditor $i$ computes its estimate $\widehat{\DP}_i = \DP\bigl(\sum_{j \in \mathcal{V}} W_j\bigr)$ from the weighted \acp{LAS vector} of all auditors (Line~29).
After $T$ rounds, $i$ outputs the verdict \emph{fair} if $|\widehat{\DP}_i| \le \tau$, and \emph{unfair} otherwise (Line~31).

\begin{figure}[t]
    \centering
    \includegraphics[width=0.65\linewidth]{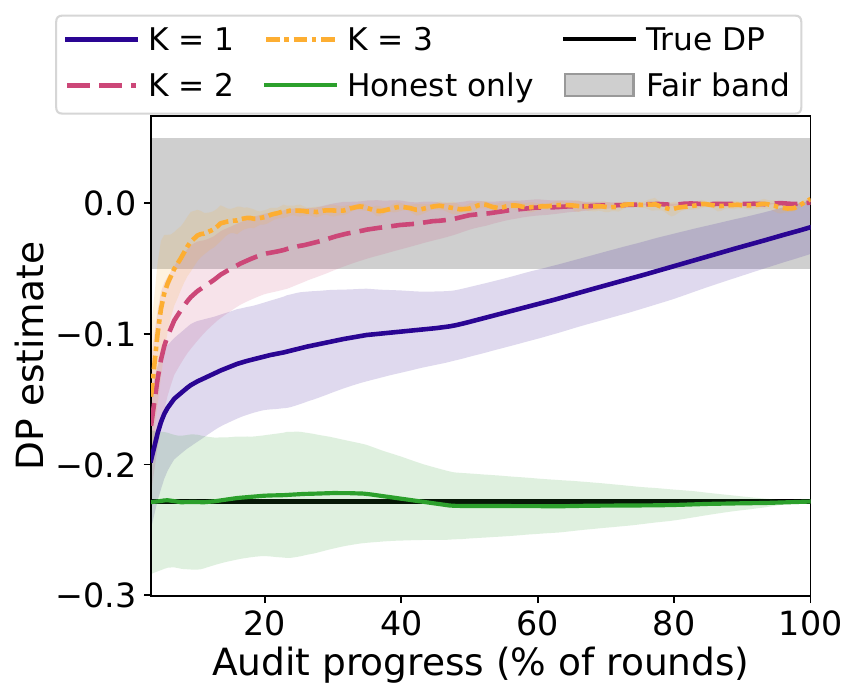}
    \caption{Estimate of $\DP$ with $K = 1, 2, 3$ adversaries and no defense, on \civil (Christian vs.\ Muslim), with $N = 20$ honest auditors, $\alpha = 1$ and $n = 300$ (20 seeds).}
    \label{fig:multiple_adversary_no_defense}
\end{figure}

\begin{algorithm}[t]
\caption{Fairwashing attacker $k$: \Cref{alg:sys} with Steps~1--2 replaced by \textsc{FabricateStats}, and Steps~5--6 omitted.}
\small
\label{alg:attacker}
\begin{algorithmic}[1]
\Require auditor set $\mathcal{V}$; neighbors $\mathcal{N}(k)$; per-round query budget $n$; rounds $T$
\State $c_k \gets (0,0,0,0)$;\ \ $\maptab{k}[j] \gets \big((0,0,0,0), 0\big)$ for every $j \in \mathcal{V}$
\For{$t = 1, \dots, T$}
    \State $\delta \gets \Call{FabricateStats}{\maptab{k}, t}$
        \Comment{replaces Steps~1--2}
    \State $c_k \gets c_k + \delta$;\ \ $\maptab{k}[k] \gets (c_k, t)$
    \State \textbf{Send} $\maptab{k}$ to every $u \in \mathcal{N}(k)$
    \State \textbf{Receive and merge} as in Algorithm~\ref{alg:sys}, Steps~3--4
        \Comment{relays entries unchanged}
\EndFor
\Statex
\Function{FabricateStats}{$\maptab{k}, t$}
    \Comment{returns increment $\delta = (\Delta a, \Delta A, \Delta b, \Delta B)$}
    \If{$t = 1$}
        \State $\Delta A \gets \lfloor n/2 \rfloor$;\ \ $\Delta B \gets n - \Delta A$
            \Comment{nothing heard yet}
        \State \Return $\big(\lfloor \Delta A/2 \rfloor,\ \Delta A,\ \lfloor \Delta B/2 \rfloor,\ \Delta B\big)$
            \Comment{neutral vector}
    \Else
        \State $C \gets \sum_{j \in \mathcal{V}} \maptab{k}[j].c$
            \Comment{attacker's view of the \ac{GAS vector}}
        \State \Return $\delta^\star \in \operatorname*{arg\,min}_{\delta \in \mathcal{X}_n} |\DP(C + \delta)|$
            \Comment{\eqref{eq:attack}; $\mathcal{X}_n$: feasible moves, \eqref{eq:feasible}}
    \EndIf
\EndFunction
\end{algorithmic}
\end{algorithm}  %
\section{Adversary}
\label{sec:adversary}

After defining the adaptive attacker, we reduce its problem to a single variable, its budget split between the two groups, characterize when one attacker can fairwash the audit, and extend the analysis to several non-colluding adversaries.

\subsection{Strategy of an Adversary}
\label{sec:adversary:threat}

An adversary $k$ joins the graph like any other auditor and follows the message format. Each round it adds an increment $\delta$ to its \ac{LAS vector}, $c_k \gets c_k + \delta$, but $\delta$ is fabricated. Honest auditors cannot verify it, because they never see another auditor's raw queries. They can only check that the \ac{LAS vector} is syntactically valid.

\textbf{Admissible increments.}\label{as:valid}
At every round, the increment $\delta = (\Delta a, \Delta A, \Delta b, \Delta B)$ sent by an adversary satisfies $\delta \in \mathcal{X}_{n}$, with
\begin{equation}\label{eq:feasible}
\begin{aligned}
  \mathcal{X}_{n} = \big\{ \delta \in \mathbb{Z}_{\geq 0}^{4} :\ 
    & \Delta A + \Delta B = n,\\
    & \Delta a \leq \Delta A,\ \Delta b \leq \Delta B \big\} .
\end{aligned}
\end{equation}
Within these bounds the entries are arbitrary, the adversary need not query $f$ at all, and may fabricate \acp{LAS vector}.

\textbf{Per-round objective under past information.}\label{as:info}
An adversary sends its \ac{LAS vector} at every round from $t = 1$ onward. When acting at round $t$, it knows its table $\maptab{k}$, whose entries date from rounds strictly before $t$. We collect them in the \ac{GAS vector}:
\begin{equation}\label{eq:history}
        C = \sum_{j \in \mathcal{V}} \maptab{k}[j].c = (a, A, b, B),
\end{equation}
which includes its own past fabricated \ac{LAS vector}. Its objective is the estimate it can compute at that moment, namely the \ac{GAS vector} corrected by its own current \ac{LAS vector}:
\begin{equation}\label{eq:objective}
\DP(C + \delta) =
    \frac{a + \Delta a}{A + \Delta A}
    -
    \frac{b + \Delta b}{B + \Delta B} .
\end{equation}

We instantiate a strong, adaptive attacker. In every round $t \geq 2$, it picks the increment that brings its objective closest to $0$
\begin{equation}\label{eq:attack}
    \delta^\star \in \operatorname*{arg\,min}_{\delta \in \mathcal{X}_{n}} \big|\DP(C + \delta)\big| .
\end{equation}

\textbf{Attacker workflow.}
Algorithm~\ref{alg:attacker} details how adversary $k$ runs this strategy. It follows the round structure of Algorithm~\ref{alg:sys}, except that it never queries the LLM. Instead of sampling and querying (Steps~1--2), it calls \textsc{FabricateStats} (Line~3). It then adds the returned increment $\delta$ to its own \ac{LAS vector} and stores the result in its entry $\maptab{k}[k]$ (Line~4). Finally, it sends and merges tables exactly like an honest auditor (Lines~5-6), relaying the other auditors' entries unchanged, so that its messages have the same format as honest ones. It skips the scoring and estimation steps (Steps~5--6), which play no role in its attack. \textsc{FabricateStats} distinguishes two cases. In the first round, the adversary has not received any table yet and has nothing to steer, so it sends a neutral \ac{LAS vector}.
An even split of its budget between the two groups, with positive rate $1/2$ in each group (Lines~9-11). %
From round~$2$ onwards, it sums its table into its view $C$ of the \ac{GAS vector}~\eqref{eq:history} (Line~13) and returns the move $\delta^\star$ of~\eqref{eq:attack} (Line~14). Computing $\delta^\star$ requires no query to the LLM, only a search over the finite set $\mathcal{X}_n$.

\textbf{Remarks.} First, the objective is evaluated \emph{at each round separately}, so the analysis is independent of the stopping rule, be it a fixed horizon or a convergence criterion. We therefore drop the round index from $C$ and $\delta$. Second, the \ac{GAS vector} \eqref{eq:history} accumulates the adversary's own past fabrications, so its influence does not dilute as rounds pass. Acting from round $1$, it owns a proportion of at least $1/(N+K)$ of the total mass at every round. Third, \eqref{eq:objective} is only a proxy for the estimate that honest auditors actually compute, since the attacker does not yet know the most recent honest \acp{LAS vector}. These recent \acp{LAS vector} are however a fraction of all \acp{LAS vector} accumulated so far, so the gap shrinks as the audit proceeds and the attacker's objective becomes an accurate proxy of the honest estimate.

\subsection{Reduction to a Budget Split}
\label{sec:adversary:move}

We assume $A > 0$ and $B > 0$ so that the ratios in \eqref{eq:objective} are well defined, which holds from round $2$ on as soon as the honest auditors cover both groups. We assume, without loss of generality, that
\begin{equation}\label{eq:wlog}
  \DP(C) < -\tau,
\end{equation}
\ie the \ac{GAS vector} is unfair against group $\gA$. The opposite case follows by exchanging the roles of the two groups.

Because the budget constraint ties $\Delta B = n - \Delta A$, a move has three degrees of freedom: how to split the budget between the two groups, and how many positives to claim in each. The objective \eqref{eq:objective} then reads
\begin{equation}\label{eq:objective:split}
    \DP(C + \delta) =
        \frac{a + \Delta a}{A + \Delta A}
    -
        \frac{b + \Delta b}{B + n - \Delta A} .
\end{equation}

\begin{lemma}[Problem reduction to extreme labels]\label{lem:to1D}
    For a fixed split $\Delta A$, $\DP(C + \delta)$ is strictly increasing in $\Delta a$ and strictly decreasing in $\Delta b$. Every maximizer of $\DP(C + \delta)$ on $\mathcal{X}_{n}$ satisfies $\Delta a = \Delta A$ and $\Delta b = 0$. 
\end{lemma}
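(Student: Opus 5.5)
Fix the split $\Delta A \in \{0,\dots,n\}$, so that $\Delta B = n - \Delta A$ is also fixed. We work directly with the expression~\eqref{eq:objective:split} and treat $\DP(C+\delta)$ as a function of the two label variables $\Delta a$ and $\Delta b$ alone. The denominators $A + \Delta A$ and $B + n - \Delta A$ do not depend on $(\Delta a, \Delta b)$. Both are strictly positive by the standing assumption $A > 0$, $B > 0$ of \Cref{sec:adversary:move}. Hence the first term of~\eqref{eq:objective:split} is affine in $\Delta a$ with slope $1/(A+\Delta A) > 0$, and the second term enters with a minus sign and is affine in $\Delta b$ with slope $-1/(B+\Delta B) < 0$. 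Each term depends on only one of the two variables. This already gives both monotonicity claims: $\DP(C+\delta)$ is strictly increasing in $\Delta a$ and strictly decreasing in $\Delta b$. No differentiation is needed, and the claim holds on the integer lattice as well as on the reals.

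For the maximizer claim, we describe the feasible label choices. Given $\Delta A$, the set $\mathcal{X}_n$ of~\eqref{eq:feasible} allows exactly the integer boxes $0 \le \Delta a \le \Delta A$ and $0 \le \Delta b \le \Delta B$, and these constraints are independent of each other.

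Now suppose $\delta$ is a maximizer of $\DP(C+\delta)$ over $\mathcal{X}_n$ with $\Delta a < \Delta A$. Replacing $\Delta a$ by $\Delta A$ keeps the same split, stays in $\mathcal{X}_n$, and by strict monotonicity strictly increases the objective. This contradicts maximality. Symmetrically, if $\Delta b > 0$, replacing it by $0$ gives a feasible point with a strictly larger value. Therefore every maximizer has $\Delta a = \Delta A$ and $\Delta b = 0$, and the optimization reduces to the single variable $\Delta A$, which is the reduction the section title announces. Maximizers exist because $\mathcal{X}_n$ is finite and nonempty.

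There is no real obstacle here. The one point that needs care is the exchange argument: the improving move must stay inside $\mathcal{X}_n$. It does, because it leaves $\Delta A + \Delta B = n$ unchanged and only moves $\Delta a$ and $\Delta b$ to endpoints of their own boxes.

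It is also worth remarking why the lemma is stated for maximizers rather than for the attack~\eqref{eq:attack}. Under~\eqref{eq:wlog}, minimizing $|\DP(C+\delta)|$ means pushing $\DP$ upward, toward $0$ from below. So the maximization is the relevant subproblem, and the subsequent analysis will handle any overshoot past $0$.
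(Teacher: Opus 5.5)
Your proof is correct and matches the paper's argument: for a fixed split, both note that $\DP(C+\delta)$ is affine in $\Delta a$ and $\Delta b$, with slopes $1/(A+\Delta A)>0$ and $-1/(B+n-\Delta A)<0$, and both then use an exchange argument inside $\mathcal{X}_n$ to show that every maximizer has extreme labels. The only cosmetic differences are that the paper moves $\Delta a$ (resp.\ $\Delta b$) by one unit instead of jumping to the endpoint, and that it also writes down the resulting identity $\max_{\delta\in\mathcal{X}_n}\DP(C+\delta)=\max_{y\in\{0,\dots,n\}}\varphi(y)$.
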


\Cref{lem:to1D} (proof postponed to \Cref{app:proof:thm:principal}) relates to the attacker~\eqref{eq:attack} as follows. By \eqref{eq:wlog}, the attacker needs to raise the estimate. If no move in $\mathcal{X}_{n}$ reaches $\DP(C + \delta) \geq 0$, minimizing $|\DP(C + \delta)|$ amounts to maximizing $\DP(C + \delta)$, so $\delta^\star$ claims that every group-$\gA$ query is positive and every group-$\gB$ query negative. Otherwise, these extreme labels would overshoot above $0$, and $\delta^\star$ instead uses intermediate labels that bring $\DP(C + \delta^\star)$ as close to $0$ as possible (Algorithm~\ref{alg:attacker}).

The extreme-label move with split $\Delta A$, \ie $\delta = (\Delta A, \Delta A, 0, n - \Delta A)$, yields $\DP(C + \delta) = \varphi(\Delta A)$, where for $y \in [0, n]$
\begin{equation}\label{eq:phiX}
    \varphi(y) =
        \frac{a + y}{A + y}
    -
        \frac{b}{B + n - y}.
\end{equation}
Both denominators are positive on $[0, n]$.

The attacker's problem thus reduces from four fabricated numbers to a single variable: how to split its budget $n$ between the two groups.

\subsection{Fairwashing Conditions for a Single Adversary}
\label{sec:adversary:theory}

Fix one adversary $k$. We look for a necessary and sufficient condition for the existence of an integer split $\Delta A \in \{0, \dots, n\}$ such that $|\varphi(\Delta A)| < \tau$.

\begin{theorem}\label{thm:principal}
Assume
\begin{equation}\label{eq:hypM}
  \min(A, B) + 1 > \frac{1}{2\tau}.
\end{equation}

Then there exists $\Delta A \in \{0, \dots, n\}$ such that $|\varphi(\Delta A)| < \tau$
if and only if the following two conditions hold:
\begin{enumerate}
  \item $\min\big(\varphi(0), \varphi(n)\big) < \tau$, that is,
  \[
    \DP(C) + \min\left( \frac{n\, b}{B(B + n)},\ \frac{n\,(A - a)}{A(A + n)} \right)
    < \tau ;
  \]
  \item $\varphi(\Delta A) > -\tau$ for some $\Delta A \in \{0, \dots, n\}$.
\end{enumerate}
\end{theorem}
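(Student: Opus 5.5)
The plan is to prove that $\varphi$ is concave on $[0,n]$, then obtain sufficiency from a discrete intermediate-value argument, where the hypothesis \eqref{eq:hypM} keeps consecutive values of $\varphi$ within $2\tau$ of each other.

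\textbf{Step 1: concavity of $\varphi$.} Differentiating \eqref{eq:phiX} gives
\[
\varphi'(y) = \frac{A-a}{(A+y)^2} - \frac{b}{(B+n-y)^2},
\qquad
\varphi''(y) = -\frac{2(A-a)}{(A+y)^3} - \frac{2b}{(B+n-y)^3} \le 0 .
\]
Both denominators are positive on $[0,n]$, so $\varphi$ is concave there. As a routine check, the closed form in condition~1 follows by subtracting $\DP(C)=a/A-b/B$ from $\varphi(0)$ and from $\varphi(n)$:
\[
\varphi(0)-\DP(C) = \frac{b}{B}-\frac{b}{B+n} = \frac{n\,b}{B(B+n)},
\qquad
\varphi(n)-\DP(C) = \frac{a+n}{A+n}-\frac{a}{A} = \frac{n\,(A-a)}{A(A+n)} .
\]

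\textbf{Step 2: necessity.} Suppose some integer $\Delta A$ satisfies $|\varphi(\Delta A)|<\tau$. Condition~2 holds immediately, since $\varphi(\Delta A)>-\tau$. For condition~1, a concave function on a segment attains its minimum at an endpoint, so $\min(\varphi(0),\varphi(n)) \le \varphi(\Delta A) < \tau$.

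\textbf{Step 3: sufficiency.} Let $y_0$ be an integer with $\varphi(y_0)>-\tau$, and let $e\in\{0,n\}$ be an endpoint with $\varphi(e)<\tau$.
\begin{itemize}
\item If $\varphi(y_0)<\tau$, then $y_0$ itself works.
\item Otherwise $\varphi(y_0)\ge\tau$. Walk through the integers from $y_0$ toward $e$, and let $y$ be the first such integer with $\varphi(y)<\tau$. It exists because $\varphi(e)<\tau$. Its predecessor $y'$ on the walk satisfies $\varphi(y')\ge\tau$.
\end{itemize}
It then suffices to show that every unit step satisfies $|\varphi(y+1)-\varphi(y)|<2\tau$. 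Granting this, $\varphi(y) > \varphi(y') - 2\tau \ge -\tau$, so $|\varphi(y)|<\tau$ and $y$ is the required split.

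\textbf{Step 4: the step bound (main obstacle).} This is where \eqref{eq:hypM} enters. Write
\[
\varphi(y+1)-\varphi(y) = P - Q,
\qquad
P = \frac{A-a}{(A+y)(A+y+1)},
\qquad
Q = \frac{b}{(B+n-y)(B+n-y-1)} .
\]
Both $P$ and $Q$ are nonnegative, because the first term of $\varphi$ increases in $y$ and the subtracted term $b/(B+n-y)$ also increases in $y$. Hence $|P-Q|\le\max(P,Q)$. This avoids summing the two contributions, which would give too weak a bound.
\begin{itemize}
\item Using $A-a\le A$ and $y\ge 0$, we get $P \le \dfrac{1}{A+1}$.
\item Using $b\le B$ and $y\le n-1$, so that $B+n-y-1\ge B$, we get $Q \le \dfrac{1}{B+1}$.
\end{itemize}
Therefore
\[
|\varphi(y+1)-\varphi(y)| \le \frac{1}{\min(A,B)+1} < 2\tau
\]
by \eqref{eq:hypM}, which completes the argument.

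The delicate point is exactly this $\max$-versus-sum issue: the sign structure (both $P$ and $Q$ nonnegative) is what lets the bound come out to $1/(\min(A,B)+1)$ rather than $1/(A+1)+1/(B+1)$.
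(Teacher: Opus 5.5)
Your proof is correct and follows essentially the same route as the paper. Necessity comes from concavity of $\varphi$. Sufficiency is a discrete intermediate-value argument driven by the same step bound $|\varphi(y+1)-\varphi(y)|\le\max(P,Q)\le 1/(\min(A,B)+1)<2\tau$.

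The only difference is in presentation. The paper argues by contradiction on a boundary pair of $I_+$ and cites the quadratic description of $I_+$ (Lemma~\ref{lem:Iplus}) to say $I_+$ is a set of consecutive integers. Your constructive walk from $y_0$ toward the endpoint shows that this extra structure is not needed for the theorem.
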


Condition~2 can be checked in closed form. It holds if and only if a quadratic has two roots with an integer split between them (Lemma~\ref{lem:Iplus}, Appendix~\ref{app:quadratic}). The proof is given in Appendix~\ref{app:proof:thm:principal}. Since the extreme-label moves belong to $\mathcal{X}_{n}$, these conditions are sufficient for the attacker~\eqref{eq:attack} to reach the fair band at round $t$ (Corollary~\ref{cor:attack}).

\textbf{Takeaway of Theorem~\ref{thm:principal}.} The assumption~\eqref{eq:hypM} and condition~1 are mild: with $\tau = 0.05$, they hold once $\min(A, B) \geq 10$ and $\max(A, B) \geq 9n$ (Lemma~\ref{lem:cond1}). The binding condition is condition~2: a single \ac{LAS vector} must close the gap $|\DP(C)| - \tau$ to the fair band. Hence, the closer an \ac{LLM} is to being fair, the easier it is to fairwash. This gap also shrinks over rounds, since $C$ already contains the adversary's past fabrications. Accordingly, without defense, a single adversary fairwashes every near-fair run on both datasets, but only part of the very unfair ones (Section~\ref{sec:experiments}).
\subsection{Multiple Non-Colluding Adversaries}
\label{sec:adversary:multi}

Theorem~\ref{thm:principal} describes what one adversary $k$ can reach from its own view $C$. With $K > 1$, the other adversaries are part of this view: their past fabrications are in $C$, and their \acp{LAS vector} of round $t$ are unknown to $k$, like the honest \ac{LAS vector} of that round. The theorem therefore applies to each adversary as is.

\textbf{More adversaries fairwash faster.}
Every adversary adds $n$ fabricated queries per round, all pushing the estimate towards $0$. With $K$ adversaries, fabricated mass accumulates in the \ac{GAS vector} $K$ times faster, so the gap that remains to be closed in condition~2 of Theorem~\ref{thm:principal} shrinks faster as well. Figure~\ref{fig:multiple_adversary_no_defense} shows this effect on \civil with $N = 20$ honest auditors. With honest auditors alone, the estimate stays at $\mathrm{DP}(f) \approx -0.23$. A single adversary moves it slowly, and the estimate enters the fair band only after about $80\%$ of the rounds. Two adversaries bring it into the band after about $25\%$ of the rounds, and three after about $10\%$. Once inside the band, the adversaries keep pushing, because their objective~\eqref{eq:attack} is $0$ rather than the edge of the band. As a result, all curves end close to $0$.

\subsection{Takeaway}
\label{sec:adversary:takeaway}

Fairwashing is cheap: adversaries need not query the \ac{LLM}, as each finds its best \ac{LAS vector} by simply trying every split of its budget between the two groups.
It is also effective: a single adversary suffices to make an \ac{LLM} appear fair, all the more easily as it is close to the fair band, and its influence persists since past fabrications keep counting in the estimate.
Nor do several adversaries need to coordinate: each attacking from its own view, they can together push the estimate past $0$.
Worse, each fabricated \ac{LAS vector}, taken alone, looks like an honest one from an auditor with an unusual query set, so robust aggregation cannot flag it. What does betray an adversary is its sequence of reports: to keep cancelling the honest \acp{LAS vector}, it must adapt its own every round, so they stop resembling its past ones.
Our defense exploits exactly this (\Cref{sec:defense}). %
\section{Defense}\label{sec:defense}

The defense at the core of \sys limits the influence of fabricated \acp{LAS vector} on each honest auditor's estimate (Step~5 of \Cref{alg:sys}).
We first explain why honest and adversarial increments differ, then how each increment is scored, weighted, and aggregated into the defended estimate.

Each honest auditor runs the defense locally, on the messages it receives. It needs no trusted party, no coordination with the other auditors, and no offline calibration. The defense rests on one observation. An honest auditor samples its queries from a fixed query set, so each \ac{LAS vector} is statistically consistent with its previous ones. The adversary of Section~\ref{sec:adversary} instead re-optimizes its \ac{LAS vector} every round to cancel the honest ones, so its increments keep departing from what it sent before. \sys therefore checks each source against its own history, never against the other sources. Under heterogeneity, honest sources legitimately disagree with each other, and this disagreement is what defeats robust aggregation techniques such as the median and the trimmed mean (\Cref{sec:background}).

\subsection{Honest and Adversarial Increments}\label{sec:defense-model}

Fix a source $j$, and let $c=(a,A,b,B)$ be the \ac{LAS vector} that auditor $i$ last held for it. If $j$ is honest, $c$ counts the queries that $j$ has drawn so far from its query set $\mathcal D_j$, and its next increment $\delta=(\Delta a,\Delta A,\Delta b,\Delta B)$ is drawn from the same query set.
Three statistics of $\delta$ then approximately follow binomial distributions, whose parameters $c$ estimates: 
\begin{enumerate*}[label=(\textit{\roman*})]
\item the number $\Delta A$ of group-$\gA$ queries among the $\Delta A+\Delta B$ queries of the increment, with the proportion of group~$\gA$ in $\mathcal D_j$ as parameter;
\item the number $\Delta a$ of positive decisions among these $\Delta A$ queries, with the \ac{LLM}'s positive rate on group~$\gA$ in $\mathcal D_j$ as parameter;
\item likewise, $\Delta b$ among the $\Delta B$ group-$\gB$ queries.
\end{enumerate*}
The approximation lies in treating the rates estimated from $c$ as exact, not in sampling without replacement: the increment and the history are two disjoint uniform samples of the same query set, so the variance of the difference between their rates does not depend on how much of the query set has been used. The binomial model is therefore accurate once the history is long compared with the increment, and overstates the deviation of an honest increment only in the first rounds.

The adversary of \Cref{sec:adversary} follows none of these distributions, because it changes its \ac{LAS vector} as honest ones arrive. Its first \ac{LAS vector} is neutral, with rate $1/2$ in each group. From round~2 on, it claims that every group-$\gA$ query is positive and every group-$\gB$ query negative (Lemma~\ref{lem:to1D}), so its rates jump away from its own past. It also adjusts the split of its budget to its current view of the \ac{GAS vector}, so its group proportion varies from round to round. Once the estimate approaches~0, it switches to intermediate labels, and with several adversaries, each one corrects for the overshoot of the others (Section~\ref{sec:adversary:multi}). Each of these moves appears as a deviation on at least one of the three
statistics.

\subsection{Suspicion Score}\label{sec:defense-score}

Auditor $i$ scores the increment $\delta=\maptab{i}[j].c-\mathit{prev}_j$ of $j$ each time it replaces its entry $\maptab{i}[j]$ with a newer one (Lines~17-22 of Algorithm~\ref{alg:sys}).
When gossip delays an entry, this increment covers several rounds of $j$; the score applies to increments of any size. 
\Cref{alg:suspicion} computes the suspicion score $S$ of $\delta$ against the source's \ac{LAS vector} $c=\mathit{prev}_j$.
It relies on the binomial $z$-score, computed as $z(s;m,p)=\frac{s-mp}{\sqrt{mp(1-p)}}$,
of $s$ successes in $m$ trials with probability $p$, which measures, in standard deviations, how far $s$ lies from what the binomial model predicts.
\Cref{alg:suspicion} proceeds in four steps.

\textbf{Estimate the source's parameters (Line~2).} From $c$, auditor $i$ estimates the three parameters of Section~\ref{sec:defense-model}: the proportion $\hat\pi$ of group~$\gA$ in the source's queries, and its positive rates $\hat q_{\gA}$ and $\hat q_{\gB}$ in each group. We use Krichevsky-Trofimov (KT) estimates, which add $1/2$ to each \ac{LAS vector}. They always lie in $(0,1)$, even when the history contains no positive or no negative decision, so the z-scores are always defined, and a short history with extreme rates is not treated as a certainty.

\textbf{Check the group proportion (Line~3).} $z_1$ compares the number $\Delta A$ of group-$\gA$ queries in the increment with the number predicted by the source's own query set. $s=\Delta A$ successes in $m=\Delta A+\Delta B$ trials, which equals the budget $n$ for an increment covering one round, with probability $\hat\pi$.
A large $|z_1|$ reveals a source that changes how it splits its budget between the groups.

\textbf{Check the positive rates (Lines~4-5).} $z_2$ compares the number $\Delta a$ of group-$\gA$ positives in the increment with the source's own rate. $s=\Delta a$ successes in $m=\Delta A$ trials with probability $\hat q_{\gA}$. It is defined only if the increment contains group-$\gA$ queries ($\Delta A>0$). $z_3$ does the same for group~$\gB$, with $s=\Delta b$, $m=\Delta B$ and $\hat q_{\gB}$. A large $|z_2|$ or $|z_3|$ reveals a source that changes its labels. For instance, after its neutral first \ac{LAS vector} ($\hat q_{\gA}\approx 1/2$), the adversary claims $\Delta a=\Delta A$, which gives
$z_2=\sqrt{\Delta A\,(1-\hat q_{\gA})/\hat q_{\gA}}\approx\sqrt{\Delta A},$
\ie about 12 for 150 group-$\gA$ queries.

\textbf{Combine (Line~6).} The score is the largest of the defined $|z_\ell|$, capped at $z_{\max}$:
$S=\min\bigl(\max_{\ell}|z_\ell|,\ z_{\max}\bigr).$

Taking the maximum means that a deviation on a single statistic suffices, \eg an adversary that keeps its split but changes its labels.
The cap bounds the score, so that no increment ever receives zero weight.
An honest increment that deviates from the binomial model, \eg early in the audit when its source's history is still short, keeps part of its weight.

\begin{algorithm}[t]
\caption{Suspicion score of one increment}
\small
\label{alg:suspicion}
\begin{algorithmic}[1]
\Require increment $\delta=(\Delta a,\Delta A,\Delta b,\Delta B)$ with $\Delta A+\Delta B>0$; \ac{LAS vector} $c=(a,A,b,B)$ of the same source, with $A+B>0$
\Ensure suspicion score $S\in[0,z_{\max}]$
\Function{Suspicion}{$\delta, c$}
  \State $\hat\pi \gets \frac{A+1/2}{A+B+1}$;\;
         $\hat q_{\gA} \gets \frac{a+1/2}{A+1}$;\;
         $\hat q_{\gB} \gets \frac{b+1/2}{B+1}$ \Comment{KT estimates}
  \State $z_1 \gets z(\Delta A;\ \Delta A+\Delta B,\ \hat\pi)$
         \Comment{group-$\gA$ proportion}
  \State $z_2 \gets z(\Delta a;\ \Delta A,\ \hat q_{\gA})$ \textbf{if} $\Delta A>0$
         \Comment{rate in $\gA$}
  \State $z_3 \gets z(\Delta b;\ \Delta B,\ \hat q_{\gB})$ \textbf{if} $\Delta B>0$
         \Comment{rate in $\gB$}
  \State \Return $\min\bigl(\max_\ell |z_\ell|,\ z_{\max}\bigr)$ over the
         defined $z_\ell$
\EndFunction
\end{algorithmic}
\end{algorithm} 
\subsection{Locked Weights}\label{sec:defense-weights}

An increment with score $S$ receives the weight $w=(1+S)^{-2}$, and auditor $i$ adds $w\,\delta$ to the weighted \ac{LAS vector} $W_j$ of its source (Lines~23-25 of Algorithm~\ref{alg:sys}). The weighting is soft. There is no threshold to calibrate, and an honest source with an unlucky increment is down-weighted rather than excluded. The weight is fixed once assigned and never revised. An adversary therefore cannot build up trust to rehabilitate what it sent earlier, and trust gained on earlier increments does not carry over to later ones, since each increment is weighted on its own. The first increment of each source has no history to compare with and receives $w=1$.

\subsection{Defended Estimate}\label{sec:defense-estimate}

Auditor $i$ sums the weighted \acp{LAS vector} of all sources, including its own (Line~29, \Cref{alg:sys}) as $\widehat{\DP}_i=\DP\Bigl(\sum_{j\in\mathcal V}W_j\Bigr)$,
where $W_j$ is the sum of the increments of $j$, each multiplied by its weight. Auditor $i$ weights its own \acp{LAS vector} like everyone else's. Trusting its own \acp{LAS vector} would anchor its estimate on its own skewed query set, and under heterogeneity, honest auditors would then disagree even without adversaries.

\textbf{Agreement.}
Every weight depends only on the source's own \acp{LAS vector} and on how auditor $i$ received them. Two honest auditors that received the same increments for every source therefore compute the same estimate. This holds on the complete graph, where every entry reaches every auditor in the round it is produced. On sparser graphs, gossip delays can merge several rounds into one increment for some auditors but not for others, so their estimates may differ slightly. Section~\ref{sec:experiments} combines them through their median.

\textbf{Cost.}
Each round, an auditor sends each neighbor its table: $|\mathcal V|$ signed entries, each a \ac{LAS vector} and a round number. Scoring takes at most $|\mathcal V|$ calls to Algorithm~\ref{alg:suspicion}, each computing three z-scores. The defense shares nothing beyond the \acp{LAS vector} that the protocol already exchanges. %
\section{Experimental Evaluation}
\label{sec:experiments}

We implement \sys\footnote{Source code: \url{https://anonymous.4open.science/r/auditopus-C04A}.} and perform an extensive experimental evaluation of our system and baselines.
Our experiments answer the following two questions:
\begin{enumerate}
  \item \textbf{Effectiveness (RQ1).} How accurately do \sys and baselines preserve the audit's verdict and its \ac{DP} estimate under attack as the proportion of adversaries grows (\Cref{sec:eval_rq1})?
  \item \textbf{Sensitivity (RQ2).} How do the number of honest auditors and level of data heterogeneity affect the robustness of \sys and baselines (\Cref{sec:eval_rq2})?
\end{enumerate}

We provide additional experiments in \Cref{app:experiments} that assess how our results generalize to a second dataset (\Cref{app:exp_civil}), and compare against additional variants of the median and trimmed-mean baselines (\Cref{app:detailed_baselines}).
We further validate our design choices, including query sampling without replacement (\Cref{app:exp_sample}) and the query budget per round (\Cref{app:budget}).

\begin{table*}[t]
\centering
\small
\caption{Summary of the six audit cases and the true \ac{DP} of the audited \acp{LLM}. \emph{Case}: very unfair, moderate or near-fair, depending on the magnitude of the true DP; \emph{Population}: the inputs the auditors query, \ie, the comments mentioning group~$\mathsf{A}$ or $\mathsf{B}$ (\civil) or the biographies in the given sector (\biasbios); \emph{Positive}: the prediction defined as positive; for \biasbios, the multi-class occupation classifier is turned into a one-vs-rest classifier for the target occupation (\Cref{app:exp_details}). $|\mathsf{A}|$, $|\mathsf{B}|$: number of inputs per group; \emph{Rate}: proportion of positive predictions per group; True DP $=$ Rate~$\mathsf{A}$ $-$ Rate~$\mathsf{B}$ over the whole population.} 
\label{tab:6_cases}
\setlength{\tabcolsep}{5pt}
\begin{tabular}{l l l l l r r r r r}
\toprule
Case & Attribute & $\mathsf{A}$ vs.\ $\mathsf{B}$ & Population & Positive & $|\mathsf{A}|$ & $|\mathsf{B}|$ & Rate $\mathsf{A}$ & Rate $\mathsf{B}$ & True DP \\
\midrule
\multicolumn{10}{l}{\textbf{\civil} \quad \qwen, zero-shot toxicity detection} \\
Very unfair & Religion & Christian vs.\ Muslim & comments & toxic & 39{,}688 & 17{,}622 & 0.208 & 0.436 & $-0.228$ \\
Moderate    & Religion & Christian vs.\ Jewish & comments & toxic & 39{,}688 &  4{,}763 & 0.208 & 0.398 & $-0.190$ \\
Near-fair   & Race     & Black vs.\ White      & comments & toxic &  9{,}385 & 16{,}926 & 0.476 & 0.537 & $-0.060$ \\
\midrule
\multicolumn{10}{l}{\textbf{\biasbios} \quad \llama, zero-shot occupation prediction (one-vs-rest)} \\
Very unfair & Gender & male vs.\ female & Healthcare & \emph{nurse}     & 45{,}742 & 48{,}985 & 0.028 & 0.300 & $-0.272$ \\
Moderate    & Gender & female vs.\ male & Healthcare & \emph{physician} & 48{,}985 & 45{,}742 & 0.423 & 0.571 & $-0.148$ \\
Near-fair   & Gender & male vs.\ female & Education  & \emph{teacher}   & 71{,}261 & 63{,}018 & 0.035 & 0.098 & $-0.063$ \\
\bottomrule
\end{tabular}
\end{table*} 
\subsection{Experimental Setup}
\label{sec:exp_setup}
We next describe our experimental setup, including dataset and baseline descriptions, and protocol parameters.

\textbf{Setup.}
We audit two zero-shot \ac{LLM} classifiers: \qwen~\cite{qwen25} for toxicity on \civil~\cite{civilcomment}, and \llama~\cite{grattafiori2024llama} for occupation (one-vs-rest) on \biasbios~\cite{de2019bias}.
For each dataset, we pick three unfair cases, which we call \emph{very unfair}, \emph{moderate} and \emph{near-fair}, with a large, medium and small true \ac{DP}, respectively (\Cref{tab:6_cases}).
Auditors hold non-IID query sets drawn from a Dirichlet distribution ($\alpha = 1$ unless stated otherwise).
Further experimental details are given in \Cref{app:exp_details}.
We fix $N=20$ honest auditors, unless stated otherwise.

\textbf{Protocol parameters.}
The $N + K$ auditors, attackers included, communicate over an Erd\H{o}s--R\'enyi graph with an expected average degree of $4$. We redraw the graph until it is connected and keep it fixed during the audit, so attackers occupy random positions in the network.
In every round, each auditor issues $n = 300$ queries to the \ac{LLM}.
We run each audit for at most $T=150$ rounds.
At that point, the honest auditors have jointly queried every input in the population exactly once, so the DP computed from the honest auditors' \acp{LAS vector} alone equals the true DP. Any deviation from it is therefore caused by the attackers.
We report results for querying with replacement in \Cref{app:exp_sample}.
We use a fair band of $\tau = 0.05$ across all experiments.

\textbf{Attack and defense.}
We vary the number of attackers $K \in \{0, \dots, 19\}$.
Each attacker runs \Cref{alg:attacker} independently.
We set $z_{\max}=8$ in all our experiments (\Cref{alg:suspicion}).

\textbf{Baselines.}
We compare \sys with three baselines%
:
(i)~\emph{no defense}, which sums all entries into the \ac{GAS vector};
(ii)~the \emph{median}~\cite{yin2018byzantine}; and
(iii)~the \emph{trimmed mean}~\cite{yin2018byzantine}, which drops the $\lfloor \beta m \rfloor$ smallest and largest of the $m$ values before averaging.
Baselines (ii) and (iii) aggregate the positive rate of each group separately (\ie $a/A$ and $b/B$), and take \ac{DP} as the difference of the two results.
We use $\beta = 20\%$. \Cref{app:detailed_baselines} reports results for $\beta \in \{10, 20, 30, 40\}\%$ and for both robust aggregators applied directly to each source's local \ac{DP}.

\textbf{Metrics.}
For our experiments, we combine the auditors' final estimates into a single verdict as follows.
At the end of the audit, every auditor sends its \ac{DP} estimate (Line~29 of \Cref{alg:sys}) to a single party, \eg, a regulator.
An adversary always sends $0$, to make the \ac{LLM} appear fair.
The global \ac{DP} estimate $\widehat{\DP}$ is the median of the $N + K$ \ac{DP} estimates.
Every \ac{LLM} we audit in our experiments is unfair, and we order the groups so that $\DP(f) < -\tau$ (\Cref{tab:6_cases}).
Hence any estimate in the fair band is a wrong verdict, and any positive error moves the estimate towards the band.
We focus on two metrics:
\begin{enumerate*}[label=(\textit{\roman*})]
\item \emph{\acf{ASR}}, the proportion of runs where $|\widehat{\DP}| \le \tau$, \ie, where an unfair \ac{LLM} is declared fair;
\item the \emph{DP estimation error} $\widehat{\DP} - \DP(f)$, which flips the verdict once it exceeds $|\DP(f)| - \tau$. An estimate pushed to $0$ has error $|\DP(f)|$.
\end{enumerate*}

For every configuration, we run $20$ seeds, where a seed fixes the data partition, the communication graph and the query order.
We average all results and present error bars showing standard deviation where applicable.

\begin{table*}[t]
\centering
\small
\caption{The \acf{ASR} and mean absolute \ac{DP} estimation error on the \biasbios dataset, over every run with $1$--$19$
adversaries ($5$--$49\%$ of all auditors; $380$ runs per case). We provide similar results for \civil in
\Cref{tab:rq1-cc-table}.}
\label{tab:rq1-bios-table}
\setlength{\tabcolsep}{5pt}

\begin{tabular}{lrrrrrrrr}
\toprule
 & \multicolumn{4}{c}{Attack success rate (\%)} & \multicolumn{4}{c}{Average DP estimation error} \\
\cmidrule(lr){2-5}\cmidrule(lr){6-9}
\ac{LLM} fairness & No-defense & Median & Trimmed mean & \sys{} & No-defense & Median & Trimmed mean & \sys{} \\
\midrule
Very unfair & 95 & 5 & \textbf{0} & \textbf{0} & 0.267 & 0.137 & 0.079 & \textbf{0.060} \\
Moderate & 97 & 46 & 71 & \textbf{0} & 0.144 & 0.189 & 0.155 & \textbf{0.034} \\
Near-fair & 100 & 20 & 53 & \textbf{9} & 0.062 & 0.079 & 0.037 & \textbf{0.006} \\
\midrule
All three cases & 97 & 24 & 41 & \textbf{3} & 0.158 & 0.135 & 0.090 & \textbf{0.034} \\
\bottomrule
\end{tabular}

\end{table*}

\begin{figure*}[t]
  \centering
    \includegraphics[width=.8\textwidth]{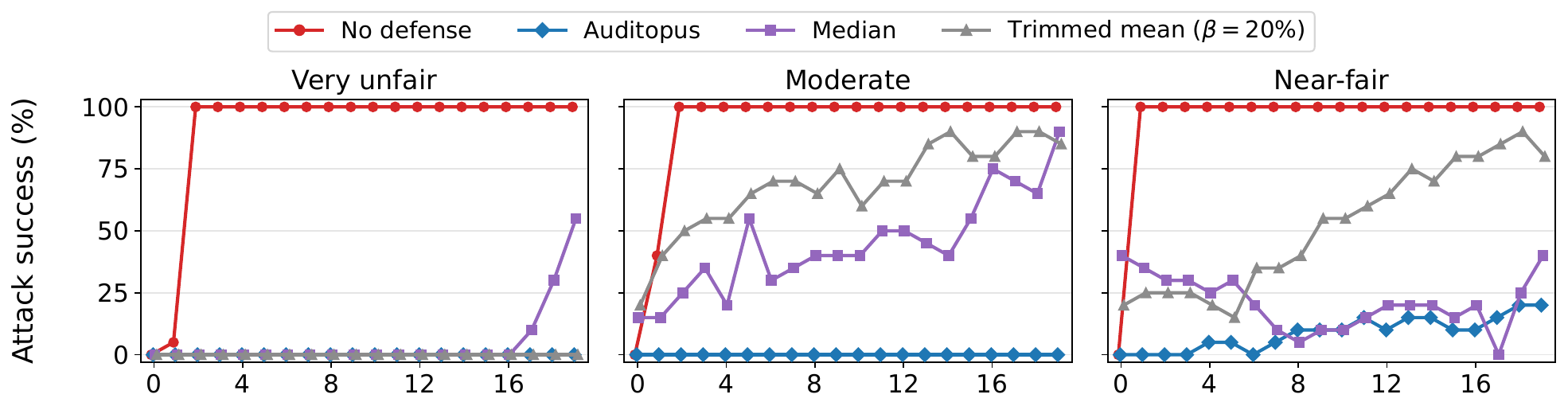} 
    
    \vspace{0.5em}
    \includegraphics[width=.8\textwidth]{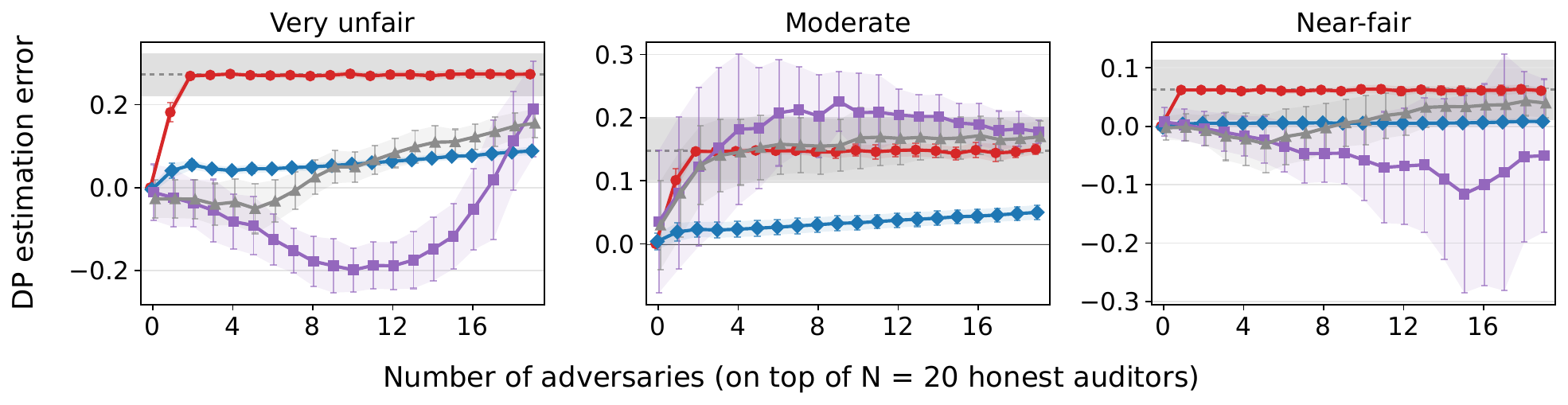}
    \caption{The attack success rate (top row) and \ac{DP} estimation error (in the bottom row) for \biasbios, while varying the number of adversaries $K$ from $0$ to $19$. We consider a very unfair (left column), moderate (middle column) and near-fair \ac{LLM} (right column). The dashed horizontal line (in the bottom row) corresponds to a \ac{DP} estimate of 0. Results for the \civil dataset are provided in \Cref{app:exp_civil}.} 
  
  \label{fig:rq1-bios}
\end{figure*}

\subsection{RQ1: Effectiveness of \sys and baselines}
\label{sec:eval_rq1}

We first quantify the robustness of \sys and baselines on the \biasbios dataset (we provide results for \civil in \Cref{app:exp_civil}).
\Cref{tab:rq1-bios-table} reports the mean \ac{ASR} and mean absolute \ac{DP} estimation error of \sys and baselines, over every run with $K = 1$ to $19$ adversaries.
Its first three rows correspond to the three \biasbios cases of \Cref{tab:6_cases}, from the very unfair to the near-fair \ac{LLM}, and the last row combines them.
This last row highlights the effectiveness of \sys.
Without any defense, the attack succeeds in $97\%$ of the audits, and the median and the trimmed mean let it through in $24\%$ and $41\%$ of the audits, respectively.
In stark contrast, \sys almost eliminates the attack, which succeeds in only $3\%$ of the audits.
\sys also substantially improves accuracy, reducing the mean absolute error from $0.158$ to $0.034$ ($-78\%$), and by $75\%$ and $62\%$ relative to the median and the trimmed mean.
These results demonstrate that \sys remains robust against adversaries and consistently outperforms all baselines.

\Cref{fig:rq1-bios} breaks down these results by number of adversaries $K$, up to $K=19$ (top: \ac{ASR}, bottom: estimation error).
Without a defense, fairwashing is remarkably easy: a single adversary already succeeds in $100\%$, $40\%$, and $5\%$ of the runs for the near-fair, moderate, and very unfair \ac{LLM}, respectively, \ie more often the closer the \ac{LLM} is to the fair band, as \Cref{thm:principal} predicts.
Two adversaries suffice to succeed in every run, driving the estimate all the way to $0$ (dashed line) and making even the very unfair \ac{LLM} appear perfectly fair.
Robust aggregation does not solve this.
The median and the trimmed mean aggregate the auditors' individual positive rates, which differ across heterogeneous query sets, so their estimate is biased even before any attack: with $K=0$, they already declare the moderate \ac{LLM} fair in $15\%$ and $20\%$ of the runs, and $46\%$ and $71\%$ under attack.
The median's low \ac{ASR} on the very unfair \ac{LLM} ($5\%$) is no sign of robustness either: up to $K=16$, the adversaries push its estimate away from the band, so that it overstates the disparity, and beyond, it breaks down ($55\%$ at $K=19$).
We analyze both baselines in more detail in \Cref{app:detailed_baselines}.

\sys, in contrast, keeps the \ac{ASR} at $0\%$ on the very unfair and moderate \acp{LLM}, even when $K=19$ of the $39$ auditors ($49\%$) are adversarial.
Its estimate does drift slowly towards the fair band as $K$ grows, since \sys down-weights the fabricated \acp{LAS vector} rather than discarding them entirely, but this drift remains modest: even at $K=19$, the error reaches only $+0.088$ on the very unfair \ac{LLM}, \ie $40\%$ of the $0.222$ gap between its true \ac{DP} and the fair band, and about half of that gap on the moderate \ac{LLM}.
The near-fair \ac{LLM} is the hardest case since, as seen above, a small shift suffices to flip its verdict. The \ac{ASR} of \sys in this case remains $0\%$ up to $K=3$ but reaches $20\%$ at $K=19$.
Even then, \sys keeps the estimation error below $+0.008$ for every $K$, about $8$x lower than the $+0.061$ without defense. In answer to RQ1, \sys remains effective across all fairness levels in \biasbios, even when nearly half of the auditors ($49\%$) attempt to fairwash the \ac{LLM}.

\begin{figure*}[t]
  \centering
  \includegraphics[width=\textwidth]{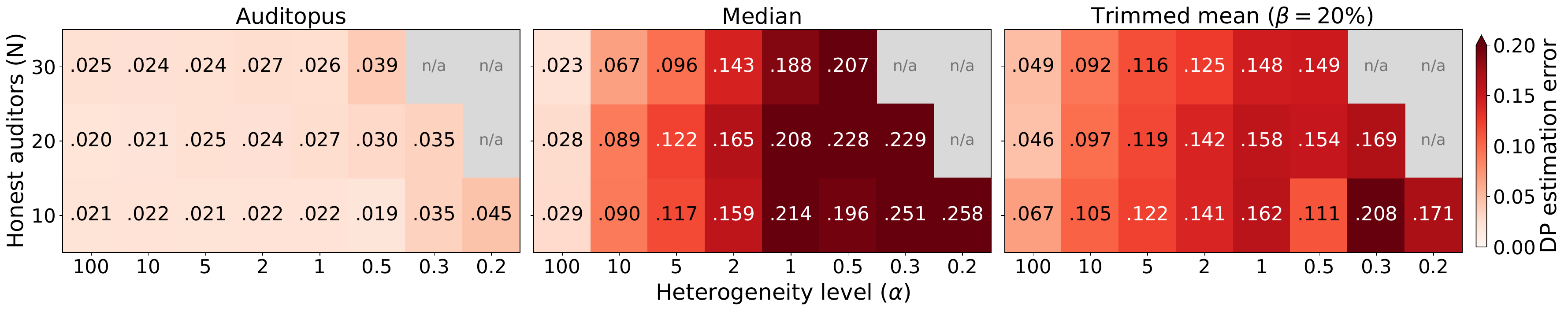}
  \caption{The \ac{DP} estimation error on the \biasbios dataset, for \sys and baselines, on the moderate \ac{LLM}. We vary the number of honest auditors ($N$) and heterogeneity level ($\alpha$). Lower is better.}
  \label{fig:rq2}
\end{figure*}

\subsection{RQ2: Sensitivity to honest auditors and heterogeneity level}
\label{sec:eval_rq2}
We next explore how the number of honest auditors $N$ and the heterogeneity of their query sets affect the robustness of \sys and baselines.
In \Cref{fig:rq2}, we report the \ac{DP} estimation error on the moderate \ac{LLM} of \biasbios for $N \in \{10, 20, 30\}$ and heterogeneity levels $\alpha$ ranging from $100$ (near-homogeneous query sets) to $0.2$ (highly heterogeneous), with the number of adversaries $K$ fixed at $30\%$ of $N$. %
Some cells in \Cref{fig:rq2} are left out since, for some low values of $\alpha$, some seeds have no feasible data split in which every auditor holds at least 30 queries of each group.

\Cref{fig:rq2} shows that \sys is rather insensitive to changing the number of honest auditors and heterogeneity level, with values ranging from 0.019 to 0.045, thus changing by at most $2.1$x across $N$ at a fixed $\alpha$.
At the same time, both robust aggregation baselines break down as the query sets become more heterogeneous, \ie as $\alpha$ decreases.
With near-homogeneous query sets ($\alpha = 100$), the median is as accurate as \sys, with an error less than $0.03$ for every $N$.
As $\alpha$ decreases, however, the errors of both the median and the trimmed mean grow steadily: for $N = 10$ and $\alpha = 0.2$, they reach $0.26$ and $0.17$, respectively.
This is because robust aggregators assume that the honest values are concentrated. Adversaries can then only shift the aggregate from one honest positive rate to another, which barely changes the estimate when these rates are close to each other. Heterogeneity spreads the honest rates apart, so the same shift moves the estimate much further.
\sys, which judges each source only against its own history, does not rely on this assumption.

Answer to RQ2: \sys is insensitive to both the number of honest auditors and the heterogeneity of their query sets, with a \ac{DP} estimation error between $0.02$ and $0.05$ in every setting, whereas the error of the median and the trimmed mean grows with heterogeneity, up to $0.25$ and $0.21$, respectively. %
\section{Related Work}
\label{sec:related_work}

\textbf{Manipulation of black-box audits.}
Black-box fairness auditing is usually framed as estimating a metric from the model's answers to a limited number of queries~\cite{rastegarpanah2021auditing}.
Such audits are easy to game when the platform is adversarial.
For example, it can justify unfair decisions with fair-looking explanations~\cite{shahin2022washing}, present a biased sample as evidence of fairness~\cite{fukuchi2020faking, lafargue2025exposing}, or answer audit queries differently from regular traffic~\cite{aivodji2019fairwashing}.
Defenses check the answers against the auditor's prior knowledge~\cite{garcia2025robust} or against an independent trusted source~\cite{bourree2023leveraging}.
Others hide the audit set inside a larger candidate set using private information retrieval~\cite{godinot2026manipulation}, or replace querying with cryptographic proofs of fairness~\cite{shamsabadi2022confidential,yadav2024fairproof}.
All these works treat the platform as the adversary and the auditor as trusted.
\sys, however, addresses a complementary threat where the platform acts through a colluding auditor.

\textbf{Collaborative auditing and poisoned aggregates.}
Alongside decentralized methods for \emph{training} fair \ac{ML} models~\cite{biswas2025fair}, a recent line of work explore how several parties can collaborate to \emph{audit} a model for fairness, though not yet what happens when some of them cannot be trusted.
Auditors can aggregate their query budgets and coverage~\cite{de2024fairness}. In collaborative active learning, \textsc{FAIR}~\cite{xu2023fair} coordinates how several parties select their queries so that each one benefits from the collaboration (individual rationality) and the gains are shared equitably. \textsc{FaaS}~\cite{toreini2024fairness} lets many parties compute a fairness metric jointly, with zero-knowledge proofs that the submitted cryptograms are well formed.
Such proofs show that the computation is correct, not that an auditor's inputs reflect the model's real answers, so such works assume honest auditors.
The closest threat studied so far is poisoning of distributed statistics. Fake users of a local differential privacy protocol can craft reports that shift estimated frequencies at will~\cite{cao2021data}.
Our attacker plays the same game against an aggregated fairness metric, but \sys exploits what local differential privacy protocols lack. Each auditor reports over many rounds, which gives honest auditors a history to check every new \ac{LAS vector} against.

\textbf{Byzantine-robust aggregation and gossip.}
Robust aggregators like the median and  %
trimmed mean~\cite{yin2018byzantine}
discard inputs lying far from the majority.
Robust gossip extends this to peer-to-peer networks~\cite{he2022byzantine, gaucher2025unified} but with weakened guarantees under data heterogeneity~\cite{karimireddy2022bucketing, allouah2023robust}.
Robust gossip also typically uses the receiving node's own value as its trusted reference~\cite{he2022byzantine, gaucher2025unified}.
\sys keeps every auditor's \acp{LAS vector} and never compares auditors with each other, so honest but atypical auditors are not penalized.

\textbf{History-based detection.}
Robust distributed learning has also exploited participants' past behavior, either through momentum~\cite{karimireddy2021learning} or by predicting each client's next model update and flagging clients whose updates keep deviating from it~\cite{zhang2022fldetector}.
\sys applies this idea to \acp{LAS vector}, where honest behavior is known exactly. An honest auditor samples from a fixed query set, so each new increment follows a binomial distribution given its own past.
Unlike these detectors, \sys therefore needs no learned predictor and no trusted server, and it down-weights \acp{LAS vector} with locked weights instead of removing participants.
Attackers that stay consistent across rounds can evade history-based detectors~\cite{xie2025model}, but in our setting such an attacker gives up the per-round adaptivity it needs to cancel the honest \acp{LAS vector}. %
\section{Conclusion}
\label{sec:conclusion}
We studied collaborative fairness auditing of black-box \acp{LLM}, where auditors with small, heterogeneous query sets estimate demographic parity together by exchanging only \aclp{LAS vector}.
We showed, theoretically and empirically, that \aclp{GAS vector} are accurate under heterogeneity but fragile. A single adversary can fabricate its \acp{LAS vector} to make an unfair \ac{LLM} appear fair, and standard robust aggregation fails because honest auditors with atypical data deviate from the majority as well.
\sys is a novel round-based decentralized auditing protocol in which each honest auditor down-weights sources whose \acp{LAS vector} are statistically inconsistent with their own history, without a trusted party and without penalizing honest but atypical auditors. Across two datasets and six audit cases, \sys reduces attack success rate and error in estimation compared against the \textit{no defense} and robust aggregation baselines.

\section{Acknowledgments}
\label{sec:acknowledgments}
This work has been funded by the Swiss National Science Foundation, under the project “FRIDAY: Frugal, PrivacyAware and Practical Decentralized Learning”, SNSF proposal No. 10.001.796.  
\section*{Open Science}
We release all artifacts needed to reproduce our results in an anonymized repository.
It contains the source code of \sys and its baselines, the datasets and prompts used in our experiments, and the scripts and configuration files to run every experiment in \Cref{sec:experiments} and \Cref{app:experiments}.
It also includes the scripts that generate all figures and tables from the raw experimental outputs.
The evaluated models are publicly available through their original providers.

\section*{LLM usage considerations}
LLMs were used for editorial purposes in this manuscript, and all outputs were inspected by the authors to ensure accuracy and originality.
We also used \acp{LLM} as coding assistants; we manually reviewed and tested all generated code for correctness and accuracy.
We did not use generative AI to propose hypotheses, design the methodology or experiments, or interpret results.
The authors take full responsibility for all content of this paper.

LLMs are also the subject of our study since \sys audits the fairness of \acp{LLM}, so our experiments necessarily query them.
To support reproducibility, we use open-weight models, and document the exact configurations and parameters.
To limit the environmental footprint, we query each model only once per prompt and store the responses.
All experiments then reuse these cached responses rather than repeating inference, so the total LLM compute is independent of the number of experiments we run.
We include these cached responses in our artifact, so our results can be reproduced without re-querying the models.
All experiments ran on our compute cluster, totaling approximately \num{100} GPU-hours.

\section*{Ethical Considerations}
This work involves no human subjects and no personal data.
All used datasets and pre-trained LLMs are publicly available.
Our goal is to make fairness audits of LLMs more trustworthy.
We study adversarial behavior to design defenses against it.
We do not introduce new attack capabilities against deployed systems or their users.
We note that no auditing procedure, including ours, can certify with absolute certainty that a model is fair.
Audit outcomes depend on the fairness metrics and demographic groups considered, which do not capture every form of harm. 

\bibliographystyle{plain}
\bibliography{main.bib}

\appendices

\crefalias{section}{appendix}
\crefalias{subsection}{appendix}
\crefalias{subsubsection}{appendix}

\onecolumn
\section{Notations}
\label{app:notations}

\begin{table*}[h]
\centering
\small
\caption{Notations used throughout the paper.}
\label{tab:notations}
\begin{tabularx}{\textwidth}{@{}l X l@{}}
\toprule
Symbol & Meaning & Defined in \\
\midrule
\multicolumn{3}{@{}l}{\textit{Protocol}} \\
$f$ & black-box \ac{LLM} under audit & Sec.~\ref{sec:bb_prelim} \\
$\gA$, $\gB$ & the two demographic groups & Sec.~\ref{sec:bb_prelim} \\
$\tau$ & half-width of the fair band $[-\tau, \tau]$ & Sec.~\ref{sec:bb_prelim} \\
$N$, $K$ & number of honest auditors and of adversaries & Sec.~\ref{sec:system_model} \\
$\mathcal{V}$ & set of all auditors, $|\mathcal{V}| = N + K$ & Sec.~\ref{sec:system_model} \\
$\mathcal{G}$, $\mathcal{N}(i)$ & communication graph (static, connected); neighbors of auditor $i$ & Sec.~\ref{sec:system_model} \\
$\mathcal{D}_i$ & private query set of auditor $i$ & Sec.~\ref{sec:colab_auditing_prelim} \\
$T$, $t$ & number of rounds; round index & Sec.~\ref{sec:system_model} \\
$n$ & per-round query budget of every auditor & Sec.~\ref{sec:protocol} \\
$Q$ & queries sampled by an auditor in one round, $|Q| = n$ & Sec.~\ref{sec:protocol} \\
$\maptab{i}$ & table of auditor $i$; $\maptab{i}[j] = (c, r)$: latest \ac{LAS vector} of $j$ known by $i$, and the round $r$ in which $j$ produced it & Sec.~\ref{sec:protocol} \\
\midrule
\multicolumn{3}{@{}l}{\textit{Statistics}} \\
$A$, $B$ & number of queries in group $\gA$, $\gB$ & Sec.~\ref{sec:colab_auditing_prelim} \\
$a$, $b$ & number of positive decisions among them & Sec.~\ref{sec:colab_auditing_prelim} \\
$c_i = (a_i, A_i, b_i, B_i)$ & \ac{LAS vector} of auditor $i$, cumulative over rounds & Sec.~\ref{sec:colab_auditing_prelim} \\
$C = (a, A, b, B)$ & \ac{GAS vector}, sum of \acp{LAS vector}; in Sec.~\ref{sec:adversary}, the attacker's view of it & Eq.~\eqref{eq:pooled_dp}, \eqref{eq:history} \\
$\delta = (\Delta a, \Delta A, \Delta b, \Delta B)$ & increment of a LAS vector: fabricated in one round by an adversary; for honest auditor $i$, $\maptab{i}[j].c - \mathrm{prev}_j$, which spans several rounds under gossip delays & Sec.~\ref{sec:protocol}, ~\ref{sec:adversary:threat} \\
$\alpha$ & concentration of the Dirichlet partition (lower = more heterogeneous) & Sec.~\ref{sec:colab_auditing_prelim} \\
\midrule
\multicolumn{3}{@{}l}{\textit{Demographic parity}} \\
$\DP(f)$ & true DP of $f$ on the population, equal to the DP of the honest auditors' \ac{GAS vector} & Eq.~\eqref{eq:estimand}, Sec.~\ref{sec:system_model} \\
$\DP(c)$ & $a/A - b/B$, for any vector $c = (a, A, b, B)$ with $A, B > 0$ & Eq.~\eqref{eq:local_dp} \\
$\DP(C)$ & estimate from the \ac{GAS vector} & Eq.~\eqref{eq:pooled_dp} \\
$\widehat{\DP}_i$ & defended estimate of honest auditor $i$ & Sec.~\ref{sec:defense-estimate} \\
$\widehat{\DP}$ & median of the final estimates of all $N + K$ auditors & Sec.~\ref{sec:experiments} \\\midrule
\multicolumn{3}{@{}l}{\textit{Adversary}} \\
$\mathcal{X}_n$ & admissible increments & Eq.~\eqref{eq:feasible} \\
$\delta^\star$ & best response of the adversary & Eq.~\eqref{eq:attack} \\
$y$ & budget split to group $\gA$ (real variable; its integer values are the splits $\Delta A$) & Sec.~\ref{sec:adversary:move} \\
$\varphi(y)$ & attacker's objective under extreme labels & Eq.~\eqref{eq:phiX} \\
\midrule
\multicolumn{3}{@{}l}{\textit{Defense}} \\
$\mathit{prev}_j$ & \ac{LAS vector} of $j$ held by $i$ at the end of the previous round & Sec.~\ref{sec:protocol} \\
$z(s; m, p)$ & binomial z-score of $s$ successes in $m$ trials with probability $p$ & Sec.~\ref{sec:defense-score} \\
$\hat\pi$, $\hat q_{\gA}$, $\hat q_{\gB}$ & KT estimates of a source's proportion of $\gA$ and of its positive rates & Alg.~\ref{alg:suspicion} \\
$z_1$, $z_2$, $z_3$ & z-scores of the group proportion and of the rates in $\gA$ and $\gB$ & Alg.~\ref{alg:suspicion} \\
$S$, $z_{\max}$ & suspicion score of an increment, $S \in [0, z_{\max}]$; score cap & Alg.~\ref{alg:suspicion} \\
$w$ & locked weight of an increment: $1$ for a source's first increment, $(1 + S)^{-2}$ otherwise & Sec.~\ref{sec:defense-weights} \\
$W_j$ & weighted \ac{LAS vector} of $j$ held by $i$: $j$'s increments, each multiplied by its weight (equals $j$'s \ac{LAS vector} when all weights are $1$) & Sec.~\ref{sec:defense-weights} \\
\midrule
\multicolumn{3}{@{}l}{\textit{Experiments}} \\
$\beta$ & trimming fraction of the trimmed mean & Sec.~\ref{sec:experiments} \\
\midrule\multicolumn{3}{@{}l}{\textit{Analysis (Appendix~\ref{app:proof:thm:principal})}} \\
$I_+$, $I_-$ & integer splits with $\varphi > -\tau$, resp.\ $\varphi < \tau$ & App.~\ref{app:proof:thm:principal} \\
$\Gamma$ & quadratic $-(1 + \tau) y^2 + \gamma_1 y + \gamma_0$, of the sign of $\varphi + \tau$ & Eq.~\eqref{eq:phitau} \\
$\gamma_1$, $\gamma_0$ & coefficient of $y$ and constant term of $\Gamma$ & App.~\ref{app:quadratic} \\
$\Lambda$ & discriminant of $\Gamma$ & App.~\ref{app:quadratic} \\
$\rho_1 < \rho_2$ & roots of $\Gamma$ & App.~\ref{app:quadratic} \\
$y^-$, $y^+$ & smallest and largest integer splits in $(\rho_1, \rho_2)\, \cap [0, n]$; $I_+ = \{y^-, \dots, y^+\}$ & Lemma~\ref{lem:Iplus} \\
\bottomrule
\end{tabularx}
\end{table*}
\clearpage 
\newpage
\section{Proofs of Section~\ref{sec:adversary}}
\label{app:proof:thm:principal}

Throughout, $C = (a, A, b, B)$ is the attacker's \ac{GAS vector}, with $A > 0$ and $B > 0$ (Section~\ref{sec:adversary:move}). Every \ac{LAS vector}, honest or fabricated, counts at most as many positives as queries in each group (for fabricated ones, by definition of $\mathcal{X}_{n}$, Section~\ref{as:valid}), so $0 \leq a \leq A$ and $0 \leq b \leq B$. Recall from~\eqref{eq:phiX} that, for $y \in [0, n]$,
\[
  \varphi(y) = \frac{a + y}{A + y} - \frac{b}{B + n - y},
\]
where both denominators are positive. The integer values of $y$ are the splits $\Delta A \in \{0, \dots, n\}$ of Section~\ref{sec:adversary:move}.

\paragraph{Roadmap}
Theorem~\ref{thm:principal} asks for an integer $y \in \{0, \dots, n\}$ with $-\tau < \varphi(y) < \tau$. We treat the two inequalities separately, through the sets
\[
  I_+ = \{ y \in \{0, \dots, n\} : \varphi(y) > -\tau \},
  \qquad
  I_- = \{ y \in \{0, \dots, n\} : \varphi(y) < \tau \},
\]
so that the theorem asks whether $I_+ \cap I_- \neq \emptyset$. Appendix~\ref{app:quadratic} describes $I_+$ exactly, through a quadratic. Appendix~\ref{app:concave} shows that, by concavity, $I_-$ is nonempty if and only if it contains $0$ or $n$. Appendix~\ref{app:jumps} shows that $\varphi$ moves by small steps, so that the sequence $\big(\varphi(y)\big)_y$ cannot jump over the fair band $(-\tau, \tau)$. Appendix~\ref{app:proof:main} combines the three. We first prove Lemma~\ref{lem:to1D}, which reduces the attacker's problem to $\varphi$.

\subsection{Proof of Lemma~\ref{lem:to1D}}

Fix a split $\Delta A \in \{0, \dots, n\}$, so that $\Delta B = n - \Delta A$. By~\eqref{eq:objective:split}, $\DP(C + \delta)$ is affine in $\Delta a$ and in $\Delta b$. Increasing $\Delta a$ by one, all else fixed, changes it by
\[
  \frac{1}{A + \Delta A} > 0,
  \qquad\text{and increasing } \Delta b \text{ by one changes it by}\qquad
  -\frac{1}{B + n - \Delta A} < 0 .
\]
Hence $\DP(C + \delta)$ is strictly increasing in $\Delta a$ and strictly decreasing in $\Delta b$. Let $\delta = (\Delta a, \Delta A, \Delta b, \Delta B) \in \mathcal{X}_{n}$. If $\Delta a < \Delta A$, raising $\Delta a$ by one keeps $\delta$ in $\mathcal{X}_{n}$ and strictly increases $\DP(C + \delta)$; if $\Delta b > 0$, lowering $\Delta b$ by one does the same. So every maximizer satisfies $\Delta a = \Delta A$ and $\Delta b = 0$. Such a move is $\delta = (\Delta A, \Delta A, 0, n - \Delta A)$, and by~\eqref{eq:phiX} it satisfies $\DP(C + \delta) = \varphi(\Delta A)$, hence
\[
  \max_{\delta \in \mathcal{X}_{n}} \DP(C + \delta) = \max_{y \in \{0, \dots, n\}} \varphi(y). \tag*{$\square$}
\]

\subsection{Where $\varphi > -\tau$, the quadratic $\Gamma$}
\label{app:quadratic}

\paragraph{Reduction to a sign}
To locate the splits $y$ with $\varphi(y) > -\tau$, we reduce $\varphi(y) + \tau$ to the common denominator $(A + y)(B + n - y)$. Multiplying the first term of $\varphi$ by $B + n - y$, the second by $A + y$, and $\tau$ by both, we get
\begin{equation}\label{eq:phitau}
  \varphi(y) + \tau = \frac{\Gamma(y)}{(A + y)(B + n - y)},
  \qquad
  \Gamma(y) = (a + y)(B + n - y) - b\,(A + y) + \tau\,(A + y)(B + n - y).
\end{equation}
The denominator is positive on $[0, n]$, so $\varphi(y) + \tau$ has the sign of $\Gamma(y)$:
\begin{equation}\label{eq:sign}
  \varphi(y) > -\tau \iff \Gamma(y) > 0 .
\end{equation}

\paragraph{Expanding $\Gamma$}
The three terms of $\Gamma(y)$ expand as
\begin{align*}
  (a + y)(B + n - y) &= -y^2 + (B + n - a)\, y + a\,(B + n), \\
  -b\,(A + y) &= -b\, y - b\,A, \\
  \tau\,(A + y)(B + n - y) &= -\tau\, y^2 + \tau\,(B + n - A)\, y + \tau\, A\,(B + n).
\end{align*}
Collecting the coefficients of $y^2$, of $y$ and the constant terms shows that $\Gamma$ is a quadratic in $y$:
\[
  \Gamma(y) = -(1 + \tau)\, y^2 + \gamma_1\, y + \gamma_0,
\]
where
\begin{align*}
  \gamma_1 &= (B + n - a - b) + \tau\,(B + n - A), \\
  \gamma_0 &= a\,(B + n) - b\,A + \tau\, A\,(B + n).
\end{align*}

\paragraph{Roots}
The leading coefficient of $\Gamma$ is $-(1 + \tau) < 0$, so $\Gamma$ is positive exactly between its roots, if it has two distinct ones. Its discriminant is
\[
  \Lambda = \gamma_1^2 + 4(1 + \tau)\gamma_0 .
\]
If $\Lambda \leq 0$, $\Gamma$ is positive nowhere. If $\Lambda > 0$, the quadratic formula gives the two roots
\[
  \rho_1 = \frac{\gamma_1 - \sqrt{\Lambda}}{2(1 + \tau)}
  \;<\;
  \rho_2 = \frac{\gamma_1 + \sqrt{\Lambda}}{2(1 + \tau)},
\]
and $\Gamma(y) > 0$ if and only if $y \in\ (\rho_1, \rho_2)$.

\paragraph{Integer splits}
The splits are the integers of $[0, n]$, so we need the integers of $(\rho_1, \rho_2)\, \cap [0, n]$. The smallest integer strictly greater than $\rho_1$ is $\lfloor \rho_1 \rfloor + 1$, and the largest integer strictly smaller than $\rho_2$ is $\lceil \rho_2 \rceil - 1$. Clipping them to $[0, n]$, we set
\[
  y^- = \max\big(0,\ \lfloor \rho_1 \rfloor + 1\big),
  \qquad
  y^+ = \min\big(n,\ \lceil \rho_2 \rceil - 1\big),
\]
with the convention that $\{y^-, \dots, y^+\} = \emptyset$ if $y^- > y^+$. Combining with~\eqref{eq:sign}, we have shown:

\begin{lemma}[Where $\varphi > -\tau$]\label{lem:Iplus}
$I_+ = \{y^-, \dots, y^+\}$ if $\Lambda > 0$, and $I_+ = \emptyset$ otherwise. In particular, $I_+$ is a set of consecutive integers, and condition~2 of Theorem~\ref{thm:principal} holds if and only if $\Lambda > 0$ and $y^- \leq y^+$.
\end{lemma}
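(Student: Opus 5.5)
The statement immediately above is Lemma~\ref{lem:Iplus}, and most of its proof is already assembled in the preceding paragraphs. The plan is to make each of the three ingredients explicit and then chain them.

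\textbf{Step 1 (sign reduction).} Starting from~\eqref{eq:phitau}, the denominator $(A+y)(B+n-y)$ is strictly positive for every $y\in[0,n]$. This holds because $A>0$, $B>0$, and both $y$ and $n-y$ are nonnegative. Hence for each integer split $y\in\{0,\dots,n\}$ we have $y\in I_+$ if and only if $\Gamma(y)>0$, which is~\eqref{eq:sign}. This converts membership in $I_+$ into a sign condition on the quadratic $\Gamma$.

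\textbf{Step 2 (sign of the quadratic).} Since the leading coefficient $-(1+\tau)$ is negative, $\Gamma$ is a downward parabola. Write it as
\[
\Gamma(y)=-(1+\tau)\Bigl(y-\tfrac{\gamma_1}{2(1+\tau)}\Bigr)^2+\tfrac{\Lambda}{4(1+\tau)} .
\]
Two cases follow.
\begin{itemize}
\item If $\Lambda\le 0$, then $\Gamma(y)\le 0$ for every real $y$. So $I_+=\emptyset$.
\item If $\Lambda>0$, the inequality $\Gamma(y)>0$ is equivalent to $\bigl|y-\gamma_1/(2(1+\tau))\bigr|<\sqrt{\Lambda}/(2(1+\tau))$. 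That is, $y\in(\rho_1,\rho_2)$.
\end{itemize}

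\textbf{Step 3 (integer bookkeeping).} This is the only step needing care. I expect it to be the main, if modest, obstacle, because of the strict inequalities at the roots.

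For an integer $y$, the inequality $y>\rho_1$ is equivalent to $y\ge\lfloor\rho_1\rfloor+1$. This holds whether or not $\rho_1$ is itself an integer, since $\lfloor\rho_1\rfloor\le\rho_1$ and $\lfloor\rho_1\rfloor+1>\rho_1$. Symmetrically, $y<\rho_2$ is equivalent to $y\le\lceil\rho_2\rceil-1$.

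Intersecting these with $0\le y\le n$ gives exactly $y^-\le y\le y^+$. Therefore $I_+=\{y^-,\dots,y^+\}$, which is empty precisely when $y^->y^+$. In particular, $I_+$ is always a set of consecutive integers, being an interval of integers or empty.

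\textbf{Step 4 (condition~2).} Condition~2 of Theorem~\ref{thm:principal} states exactly that $I_+\neq\emptyset$. By Steps~2 and~3, this holds if and only if $\Lambda>0$ and $y^-\le y^+$, which completes the lemma.
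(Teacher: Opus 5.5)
Your proof is correct and follows the paper's argument step for step. You reduce membership in $I_+$ to the sign of $\Gamma$ via the positive denominator $(A+y)(B+n-y)$, treat $\Gamma$ as a downward parabola through its discriminant $\Lambda$ (making this explicit by completing the square), and use the same floor/ceiling bookkeeping clipped to $[0,n]$, spelling out the integer equivalences that the paper states without comment.
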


\subsection{Where $\varphi < \tau$, concavity}
\label{app:concave}

\begin{lemma}[Concavity and endpoints]\label{lem:concave}
$\varphi$ is concave on $[0, n]$, hence $\varphi(y) \geq \min\big(\varphi(0), \varphi(n)\big)$ for every $y \in [0, n]$. In particular, $I_- \neq \emptyset$ if and only if $\min\big(\varphi(0), \varphi(n)\big) < \tau$. Moreover,
\[
  \varphi(0) = \DP(C) + \frac{n\, b}{B(B + n)},
  \qquad
  \varphi(n) = \DP(C) + \frac{n\,(A - a)}{A(A + n)}.
\]
\end{lemma}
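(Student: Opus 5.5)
The plan is to prove concavity directly from the second derivative of $\varphi$ on $[0,n]$, then deduce the endpoint statement and compute the two endpoint values explicitly.

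\textbf{Step 1: concavity.} Write $\varphi(y) = g(y) - h(y)$ with $g(y) = \frac{a+y}{A+y} = 1 - \frac{A-a}{A+y}$ and $h(y) = \frac{b}{B+n-y}$. Both denominators are positive on $[0,n]$.
\begin{itemize}
\item For $g$, we get $g''(y) = -\frac{2(A-a)}{(A+y)^3} \le 0$, using $a \le A$ (recalled at the start of this appendix).
\item For $h$, we get $h''(y) = \frac{2b}{(B+n-y)^3} \ge 0$, using $b \ge 0$.
\end{itemize}
Hence $\varphi'' = g'' - h'' \le 0$ on $[0,n]$, so $\varphi$ is concave there. The degenerate cases $a=A$ or $b=0$ only make one term affine, which is still concave, so no separate treatment is needed.

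\textbf{Step 2: minimum at an endpoint.} A concave function on a segment attains its minimum at an endpoint. Writing $y = (1-\lambda)\cdot 0 + \lambda n$, concavity gives
\[
\varphi(y) \ge (1-\lambda)\varphi(0) + \lambda\varphi(n) \ge \min(\varphi(0),\varphi(n)).
\]
For the equivalence, note that $0$ and $n$ are themselves integer splits. If $\min(\varphi(0),\varphi(n)) < \tau$, the corresponding endpoint lies in $I_-$. Conversely, if some integer $y$ satisfies $\varphi(y) < \tau$, the bound gives $\min(\varphi(0),\varphi(n)) \le \varphi(y) < \tau$.

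\textbf{Step 3: endpoint values.} Both are short algebra from $\DP(C) = a/A - b/B$.
\begin{itemize}
\item At $y=0$: $\varphi(0) = \frac{a}{A} - \frac{b}{B+n}$, and $\frac{b}{B} - \frac{b}{B+n} = \frac{nb}{B(B+n)}$, which gives the stated formula.
\item At $y=n$: $\varphi(n) = \frac{a+n}{A+n} - \frac{b}{B}$, and $\frac{a+n}{A+n} - \frac{a}{A} = \frac{A(a+n) - a(A+n)}{A(A+n)} = \frac{n(A-a)}{A(A+n)}$, which gives the other formula.
\end{itemize}

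There is no real obstacle here. The only care needed is to invoke $0 \le a \le A$ and $b \ge 0$ for the signs of the second derivatives, and to keep the denominators positive so that the differentiation is valid on all of $[0,n]$.
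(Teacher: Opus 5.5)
Your proposal is correct and follows essentially the same route as the paper: rewrite $\frac{a+y}{A+y} = 1 - \frac{A-a}{A+y}$, check $\varphi'' \le 0$ using $a \le A$ and $b \ge 0$, bound $\varphi$ below by the endpoint convex combination, and use that $0$ and $n$ are themselves integer splits. The endpoint values follow by the same short algebra from $\DP(C) = a/A - b/B$.
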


\begin{proof}
Writing $\frac{a + y}{A + y} = 1 - \frac{A - a}{A + y}$, we have $\varphi(y) = 1 - \frac{A - a}{A + y} - \frac{b}{B + n - y}$, hence
\[
  \varphi'(y) = \frac{A - a}{(A + y)^2} - \frac{b}{(B + n - y)^2},
  \qquad
  \varphi''(y) = -\frac{2(A - a)}{(A + y)^3} - \frac{2b}{(B + n - y)^3} \leq 0 ,
\]
since $A - a \geq 0$, $b \geq 0$ and both denominators are positive. Every $y \in [0, n]$ is $y = \lambda n$ with $\lambda \in [0, 1]$, and concavity gives $\varphi(y) \geq (1 - \lambda)\varphi(0) + \lambda\varphi(n) \geq \min\big(\varphi(0), \varphi(n)\big)$. So if both endpoints are at least $\tau$, then $\varphi \geq \tau$ everywhere and $I_- = \emptyset$; conversely, if one endpoint is below $\tau$, it belongs to $I_-$. For the endpoint values, using $\DP(C) = a/A - b/B$,
\[
  \varphi(0) - \DP(C) = \frac{b}{B} - \frac{b}{B + n} = \frac{n\, b}{B(B + n)},
  \qquad
  \varphi(n) - \DP(C) = \frac{a + n}{A + n} - \frac{a}{A} = \frac{n\,(A - a)}{A(A + n)} . \qedhere
\]
\end{proof}

In practice, condition~1 holds automatically once the \ac{GAS vector} is large compared with the attacker's budget.

\begin{lemma}[Condition~1 in practice]\label{lem:cond1}
If $\DP(C) < -\tau$ and $\max(A, B) \geq n\,\big(1/(2\tau) - 1\big)$, then condition~1 of Theorem~\ref{thm:principal} holds.
\end{lemma}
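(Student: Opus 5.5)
We have to show $\min(\varphi(0),\varphi(n)) < \tau$ under the hypotheses $\DP(C) < -\tau$ and $\max(A,B) \ge n(1/(2\tau)-1)$. By Lemma~\ref{lem:concave}, it suffices to bound one of the two endpoint increments by something strictly less than $2\tau$. Then
\[
  \varphi \le \DP(C) + (\text{increment}) < -\tau + 2\tau = \tau .
\]
The plan is to split into two cases according to which of $A$ and $B$ attains the maximum, and in each case use the corresponding endpoint.

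\textbf{Case $B \ge A$.} Here $B \ge n(1/(2\tau)-1)$, and we look at $\varphi(0)$. Since $b \le B$, the increment satisfies
\[
  \frac{n\,b}{B(B+n)} \le \frac{n}{B+n}.
\]
The map $x \mapsto n/(x+n)$ is decreasing, so
\[
  \frac{n}{B+n} \le \frac{n}{n(1/(2\tau)-1)+n} = 2\tau .
\]
This gives $\varphi(0) \le \DP(C) + 2\tau < \tau$, where the strict inequality comes from $\DP(C) < -\tau$.

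\textbf{Case $A \ge B$.} This is symmetric, using $\varphi(n)$. Since $A - a \le A$, we get
\[
  \frac{n(A-a)}{A(A+n)} \le \frac{n}{A+n} \le 2\tau ,
\]
and again $\varphi(n) < \tau$.

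The one thing to check is that the algebra $n(1/(2\tau)-1) + n = n/(2\tau)$ is exact, so the bound is $2\tau$ with equality allowed. This is harmless, because the strictness comes from $\DP(C) < -\tau$. If $1/(2\tau)-1 \le 0$, the hypothesis is vacuous, but the same computation still works: the increment is at most $n/(B+n) \le 1 \le 2\tau$ whenever $\tau \ge 1/2$.

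There is no real obstacle. The argument only uses the trivial bounds $b \le B$ and $A - a \le A$, together with the endpoint formulas from Lemma~\ref{lem:concave}.
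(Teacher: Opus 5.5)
Your proof is correct and follows essentially the same route as the paper's: the endpoint formulas of Lemma~\ref{lem:concave} and the trivial bounds $b \le B$, $A - a \le A$ give an endpoint increment of at most $n/(\max(A,B)+n) \le 2\tau$, and the strict inequality comes from $\DP(C) < -\tau$. Your case split on whether $A$ or $B$ is larger is just the unpacked form of the paper's one-line bound on the minimum of the two endpoint increments.
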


\begin{proof}
By Lemma~\ref{lem:concave}, condition~1 reads $\min\big(\varphi(0) - \DP(C),\ \varphi(n) - \DP(C)\big) < \tau - \DP(C)$. Since $b \leq B$ and $A - a \leq A$, the endpoint values of Lemma~\ref{lem:concave} give
\[
  \varphi(0) - \DP(C) = \frac{n\, b}{B(B + n)} \leq \frac{n}{B + n},
  \qquad
  \varphi(n) - \DP(C) = \frac{n\,(A - a)}{A(A + n)} \leq \frac{n}{A + n},
\]
so the left-hand side is at most $\frac{n}{\max(A, B) + n}$. This is at most $2\tau$ if and only if $\max(A, B) \geq n\,(1 - 2\tau)/(2\tau) = n\,\big(1/(2\tau) - 1\big)$, which holds by assumption. Finally, $\tau - \DP(C) > 2\tau$ since $\DP(C) < -\tau$.
\end{proof}

\subsection{Small jumps}
\label{app:jumps}

\begin{lemma}[Small jumps]\label{lem:jumps}
For every $y \in \{0, \dots, n - 1\}$,
\[
  |\varphi(y + 1) - \varphi(y)| \leq \frac{1}{\min(A, B) + 1}.
\]
\end{lemma}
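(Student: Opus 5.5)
We bound the increment of each of the two terms of $\varphi$ separately and then combine them by the triangle inequality, using that the two terms move in the same direction. We write $\varphi(y) = g(y) - h(y)$ with $g(y) = \frac{a+y}{A+y}$ and $h(y) = \frac{b}{B+n-y}$.

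\textbf{First term.} A direct computation gives
\[
  g(y+1) - g(y) = \frac{A-a}{(A+y)(A+y+1)} .
\]
This is nonnegative because $0 \le a \le A$. Since $A - a \le A \le A+y$, it is at most $\frac{1}{A+y+1} \le \frac{1}{A+1}$.

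\textbf{Second term.} Similarly,
\[
  h(y+1) - h(y) = \frac{b}{(B+n-y-1)(B+n-y)} .
\]
This is also nonnegative. For $y \le n-1$ we have $B+n-y-1 \ge B$, and $b \le B$, so the increment is at most $\frac{1}{B+n-y} \le \frac{1}{B+1}$.

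\textbf{Combining.} The step of $\varphi$ is the difference of two nonnegative numbers, $\varphi(y+1)-\varphi(y) = \Delta g - \Delta h$. Hence its absolute value is at most $\max(\Delta g, \Delta h)$, not their sum. This gives
\[
  |\varphi(y+1)-\varphi(y)| \le \max\!\left(\frac{1}{A+1}, \frac{1}{B+1}\right) = \frac{1}{\min(A,B)+1},
\]
which is the claimed bound.

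The only delicate step is noticing that both terms increase with $y$, so the steps partially cancel. A naive triangle inequality would only give a bound twice as large, and that would not match the factor $2\tau$ in assumption~\eqref{eq:hypM}.
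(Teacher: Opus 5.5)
Your proof is correct and is essentially the paper's argument. It uses the same nonnegative increments of the two terms, combined through $|u - v| \le \max(u, v)$ rather than the triangle inequality your opening sentence mentions, which is what makes the bound compatible with the factor $2\tau$ in \eqref{eq:hypM}. The only difference is cosmetic: the paper bounds each increment by its value at the extreme split ($y = 0$, resp.\ $y = n-1$) via monotonicity in $y$, while you bound it pointwise.
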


\begin{proof}
We compute the variation of each term of $\varphi$ when $y$ increases by one. For the first term,
\[
  \frac{a + y + 1}{A + y + 1} - \frac{a + y}{A + y}
  = \frac{(a + y + 1)(A + y) - (a + y)(A + y + 1)}{(A + y)(A + y + 1)}
  = \frac{A - a}{(A + y)(A + y + 1)} =: u_y ,
\]
since the numerator simplifies to $(A + y) - (a + y) = A - a$. For the second term,
\[
  \frac{b}{B + n - y - 1} - \frac{b}{B + n - y}
  = \frac{b}{(B + n - y - 1)(B + n - y)} =: v_y .
\]
Hence $\varphi(y + 1) - \varphi(y) = u_y - v_y$ with $u_y, v_y \geq 0$, so that $|\varphi(y + 1) - \varphi(y)| \leq \max(u_y, v_y)$. The denominator of $u_y$ grows with $y$, so $u_y$ is largest at $y = 0$, where $u_0 = \frac{A - a}{A(A + 1)} \leq \frac{1}{A + 1}$ since $A - a \leq A$. The denominator of $v_y$ decreases with $y$, so $v_y$ is largest at $y = n - 1$, where $v_{n - 1} = \frac{b}{B(B + 1)} \leq \frac{1}{B + 1}$ since $b \leq B$.
\end{proof}

\subsection{Proof of Theorem~\ref{thm:principal}}
\label{app:proof:main}

\begin{proof}[Proof of Theorem~\ref{thm:principal}]
By the roadmap, we must show that $I_+ \cap I_- \neq \emptyset$ if and only if conditions~1 and~2 hold.

\emph{Necessity.} Let $y \in I_+ \cap I_-$. Then $I_- \neq \emptyset$, which is condition~1 by Lemma~\ref{lem:concave}, and $y \in I_+$ is condition~2. This direction does not use~\eqref{eq:hypM}.

\emph{Sufficiency.} $I_+$ is nonempty by condition~2, and $I_-$ is nonempty by condition~1 and Lemma~\ref{lem:concave}. Assume by contradiction that $I_+ \cap I_- = \emptyset$. Then $\varphi \geq \tau$ on $I_+$, and $I_+ \neq \{0, \dots, n\}$ since $I_-$ is nonempty. As $I_+$ is a set of consecutive integers (Lemma~\ref{lem:Iplus}), there exist two neighbors $y \in I_+$ and $y' \notin I_+$, $|y - y'| = 1$, with $\varphi(y) \geq \tau$ and $\varphi(y') \leq -\tau$. Hence
\[
  |\varphi(y) - \varphi(y')| \geq 2\tau > \frac{1}{\min(A, B) + 1}
\]
by~\eqref{eq:hypM}, which contradicts Lemma~\ref{lem:jumps}.
\end{proof}

\begin{corollary}[The attacker reaches the fair band]\label{cor:attack}
Under the assumptions of Theorem~\ref{thm:principal}, if conditions~1 and~2 hold, the attacker~\eqref{eq:attack} satisfies $|\DP(C + \delta^\star)| < \tau$.
\end{corollary}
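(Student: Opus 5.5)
The argument chains Theorem~\ref{thm:principal} to the definition of $\delta^\star$ in~\eqref{eq:attack}. The key fact is that the extreme-label moves form a subset of the admissible set $\mathcal{X}_{n}$. Because $\delta^\star$ minimizes $|\DP(C+\delta)|$ over the whole feasible set, it suffices to show that some admissible move $\delta$ has $|\DP(C+\delta)|<\tau$. Then $|\DP(C+\delta^\star)|\le|\DP(C+\delta)|<\tau$.

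\textbf{Step 1: produce a good split.} Under assumption~\eqref{eq:hypM}, conditions~1 and~2 let Theorem~\ref{thm:principal} supply an integer split $y\in\{0,\dots,n\}$ with $|\varphi(y)|<\tau$.

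\textbf{Step 2: check admissibility.} Define the extreme-label move $\delta_y=(y,y,0,n-y)$. It has nonnegative integer entries, satisfies $\Delta A+\Delta B=n$, and meets $\Delta a=y\le \Delta A$ and $\Delta b=0\le\Delta B$. Hence $\delta_y\in\mathcal{X}_{n}$ by~\eqref{eq:feasible}.

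\textbf{Step 3: identify the objective.} By~\eqref{eq:objective:split} and~\eqref{eq:phiX}, $\DP(C+\delta_y)=\varphi(y)$. This is exactly the identity recorded at the end of the proof of Lemma~\ref{lem:to1D}.

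\textbf{Step 4: conclude.} Minimality of $\delta^\star$ in~\eqref{eq:attack} now gives $|\DP(C+\delta^\star)|\le|\varphi(y)|<\tau$.

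The only point needing care is that $\DP(C+\delta)$ is well defined for every $\delta\in\mathcal{X}_n$, so that the $\arg\min$ ranges over finite values and exists. This follows from the standing assumption $A,B>0$ of Section~\ref{sec:adversary:move}: the denominators $A+\Delta A$ and $B+\Delta B$ stay positive. The argmin therefore exists because $\mathcal{X}_n$ is finite and nonempty. I expect no real obstacle, since all the substance lies in Theorem~\ref{thm:principal}.
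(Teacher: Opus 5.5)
Your proposal is correct and matches the paper's proof exactly: you invoke Theorem~\ref{thm:principal} to get a split $y$ with $|\varphi(y)|<\tau$, check that the extreme-label move $(y,y,0,n-y)$ lies in $\mathcal{X}_n$ with $\DP(C+\delta)=\varphi(y)$, and conclude by minimality of $\delta^\star$. Your extra remark that the argmin exists (since $\mathcal{X}_n$ is finite and nonempty and the denominators are positive because $A,B>0$) is a harmless detail that the paper leaves implicit.
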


\begin{proof}
By Theorem~\ref{thm:principal}, there exists $y \in \{0, \dots, n\}$ with $|\varphi(y)| < \tau$. The extreme-label move $\delta = (y, y, 0, n - y)$ belongs to $\mathcal{X}_{n}$ and satisfies $\DP(C + \delta) = \varphi(y)$. Since $\delta^\star$ minimizes $|\DP(C + \delta)|$ over $\mathcal{X}_{n}$, we get $|\DP(C + \delta^\star)| \leq |\varphi(y)| < \tau$.
\end{proof} 
\newpage
\section{Additional experimental details}
\label{app:exp_details}

This appendix complements the setup of Section~\ref{sec:exp_setup} with additional details: how we obtain the audited \acp{LLM}' predictions on each dataset (Appendices~\ref{app:prompts:civil} and~\ref{app:prompts:biasbios}), how we partition the inputs across auditors, and our implementation and hardware specifications.
In both datasets we use the audited \ac{LLM} as a zero-shot classifier. Every input is sent as a separate request, with no in-context examples and no fine-tuning.

\subsection{\civil dataset}
\label{app:prompts:civil}

The audited \ac{LLM} is \qwen, which we ask whether each comment is toxic. A positive prediction means the \ac{LLM} flags the comment as toxic. The \ac{LLM}'s answer could not be parsed for 4 of the 182,176 comments, and we drop these, leaving 182,172 comments.  Each comment is annotated with one of 16 identity groups. Figure~\ref{fig:prompt-civil} shows the prompt.

\begin{figure}[t]
\begin{promptbox}[title={Prompt: toxicity classification (\civil)}]
You are a content moderation system. Decide whether the following comment is toxic: rude, disrespectful or unreasonable language that is likely to make someone leave a discussion. Reply with JSON of the form {"items": [{"yes": Y, "no": N}]}, where Y is your confidence in percent that the comment is toxic, N your confidence that it is not, and Y + N = 100.
\end{promptbox}
\caption{Prompt used to classify each comment in \civil with \qwen.}
\label{fig:prompt-civil}
\end{figure}

\subsection{\biasbios dataset}
\label{app:prompts:biasbios}

\textbf{Data.} We use the LabHC release of \biasbios~\cite{de2019bias}, stacking the train, test, and dev splits in that order (257,478 + 99,069 + 39,642 biographies). We keep the same 25 occupations as \cite{hartmann2026audit} and drop \emph{interior designer}, \emph{personal trainer}, and \emph{rapper}. This leaves 391,895 biographies. Each biography has a binary gender label.

\textbf{LLM and prompt.} The audited \ac{LLM} is \llama, served with vLLM. Each biography is sent as a single user message formatted with the \ac{LLM}'s chat template (Figure~\ref{fig:prompt-biasbios}). We add no separate system prompt, so that the same message can be used with every \ac{LLM} we evaluate.

\textbf{Decoding.} We decode greedily (temperature 0, fixed seed) with at most 16 new tokens. We use vLLM's structured outputs to restrict the answer to exactly one of the 25 occupation names. Every biography therefore gets a valid label, and unlike in \civil, no input is dropped. The audit uses only the predicted label; we do not ask the \ac{LLM} for per-class scores.

\textbf{Audits.}
We group the 25 occupations into sectors following the U.S.\ Standard Occupational Classification.
An audit fixes a sector and a target occupation within it: its population consists of all biographies whose true occupation lies in that sector, and a prediction is positive if and only if the \ac{LLM} predicts the target occupation (Table~\ref{tab:6_cases}).
The audit thus measures whether, among people working in the same sector, the \ac{LLM} assigns the target occupation at different rates to men and women.

\subsection{Data partition}
For every combination of group and predicted label, the inputs are split across auditors with proportions drawn from a Dirichlet distribution of concentration $\alpha$.
We redraw the partition if any auditor holds fewer than $30$ inputs of either group.
The resulting local estimates are strongly heterogeneous: across cases, the 5th percentile of an auditor's local DP ranges from $-0.55$ to $-0.84$, and the 95th percentile from $+0.08$ to $+0.66$.

\begin{figure}[t]
\begin{promptbox}[title={Prompt: occupation classification (\biasbios)}]
You are an occupation classification system. You are given a short biography of a person. Answer with the person's occupation, chosen from this list: accountant, architect, attorney, chiropractor, comedian, composer, dentist, dietitian, dj, filmmaker, journalist, model, nurse, painter, paralegal, pastor, photographer, physician, poet, professor, psychologist, software\_engineer, surgeon, teacher, yoga\_teacher. Reply with the occupation name only, exactly as written in the list.\par
\medskip
Biography: \textit{\textrm{\{biography\}}}\par
\medskip
Occupation:
\end{promptbox}
\caption{Prompt used to classify each biography in \biasbios with \llama. \textit{\{biography\}} is replaced by the biography text. The answer is constrained to the 25 occupation names in the list.}
\label{fig:prompt-biasbios}
\end{figure}

\subsection{Implementation details and hardware}
We implement \sys and the audit simulation in Python using the \textsc{NumPy} and \textsc{Pandas} libraries.
The simulation is fully seeded.
Each run is determined by its configuration and seed, so every result can be reproduced exactly.
The audited \acp{LLM}' predictions are computed once, before the audits.
We serve the \acp{LLM} with \textsc{vLLM}~\cite{kwon2023efficient}.
We generate the predictions using NVIDIA A100 GPUs (one model replica per GPU) available on our compute cluster.
The audit simulations all run on CPUs.
 
\newpage

\newpage
\section{Additional experiments}
\label{app:experiments}

We complement \Cref{sec:experiments} with additional experiments that answer the following questions:
\begin{itemize}
    \item \textbf{Generalization (RQ3).} How accurately do \sys and baselines preserve the audit's verdict and its DP estimate on the \civil dataset as the proportion of adversaries grows (\Cref{app:exp_civil})?
    \item \textbf{Query sampling (RQ4).} How does querying with replacement, instead of without replacement, affect the accuracy of an audit with honest auditors only, and how does this depend on data heterogeneity (\Cref{app:exp_sample})?
    \item \textbf{Baseline configuration (RQ5).} Do the robust aggregation baselines become robust when they aggregate each auditor's local DP instead of per-group positive rates, or when we change the trimming fraction $\beta$ of the trimmed mean (\Cref{app:detailed_baselines})?
    \item \textbf{Query budget (RQ6)}. How does the per-round query budget, which sets the size of each update that \sys scores, affect the accuracy and robustness of \sys{} and the baselines (\Cref{app:budget})? 
\end{itemize}

\begin{table*}[t]
\centering
\small
\caption{The \acf{ASR} and mean absolute \ac{DP} estimation error on the \civil dataset, over every run with $1$--$19$
adversaries ($5$--$49\%$ of all auditors; $380$ runs per case; as Table~\ref{tab:rq1-bios-table}). Bold: the lowest in
each row.}
\label{tab:rq1-cc-table}
\setlength{\tabcolsep}{5pt}

\begin{tabular}{lrrrrrrrr}
\toprule
 & \multicolumn{4}{c}{Attack success rate (\%)} & \multicolumn{4}{c}{Average DP estimation error} \\
\cmidrule(lr){2-5}\cmidrule(lr){6-9}
\ac{LLM} fairness & No-defense & Median & Trimmed mean & \sys{} & No-defense & Median & Trimmed mean & \sys{} \\
\midrule
Very unfair & 95 & 42 & 63 & \textbf{0} & 0.221 & 0.226 & 0.221 & \textbf{0.050} \\
Moderate & 99 & 33 & 35 & \textbf{0} & 0.186 & 0.235 & 0.219 & \textbf{0.022} \\
Near-fair & 97 & \textbf{68} & 80 & 78 & 0.061 & 0.081 & 0.073 & \textbf{0.028} \\
\midrule
All three cases & 97 & 48 & 59 & \textbf{26} & 0.156 & 0.180 & 0.171 & \textbf{0.034} \\
\bottomrule
\end{tabular}

\end{table*} 
\subsection{RQ3: Effectiveness of \sys and baselines on \civil Dataset}
\label{app:exp_civil}

\begin{figure*}[t]
  \centering
    \includegraphics[width=.85\textwidth]{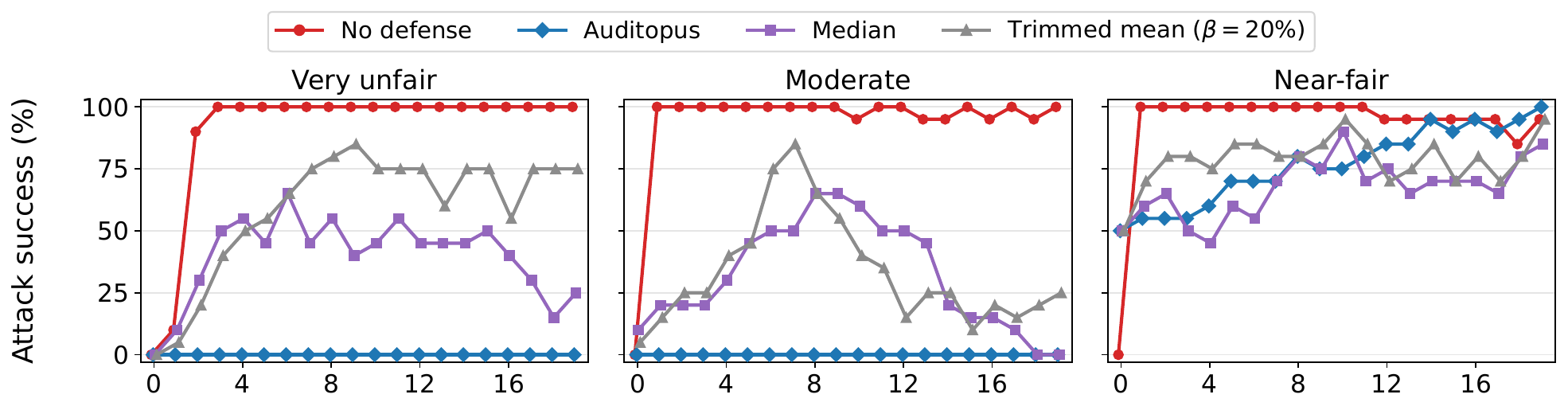} 
    \vspace{0.5em}
    \includegraphics[width=.85\textwidth]{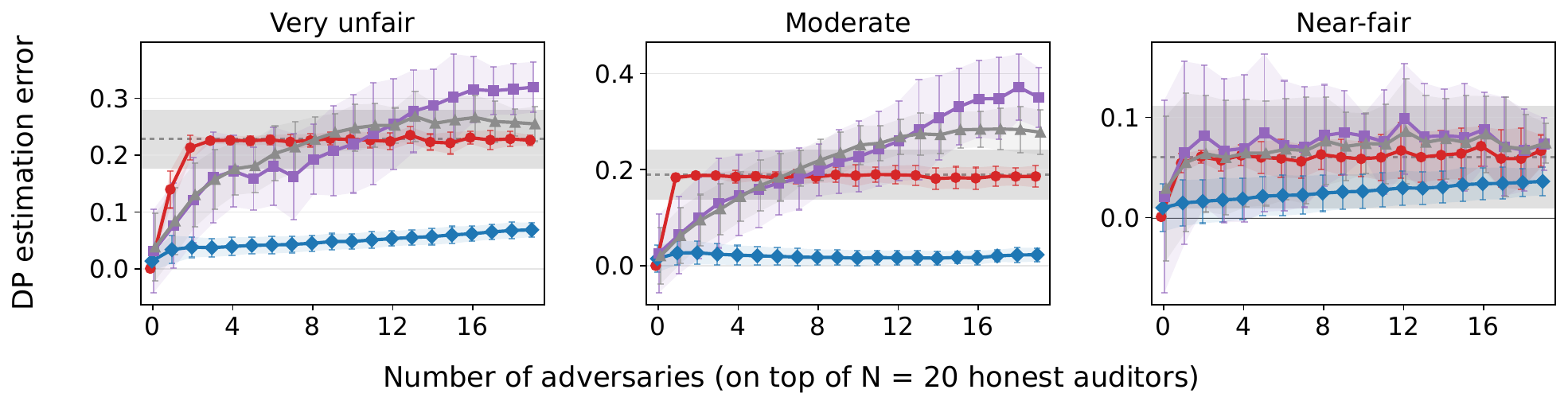}
    \caption{The attack success rate (top row) and \ac{DP} estimation error (in the bottom row) for \civil, while varying the number of adversaries $K$ from $0$ to $19$. We consider a very unfair (left column), moderate (middle column) and near-fair \ac{LLM} (right column). The dashed horizontal line in \Cref{fig:rq1-civil} (in the bottom row) corresponds to a \ac{DP} value of 0.}
  
  \label{fig:rq1-civil}
\end{figure*}

We repeat the effectiveness experiment of \Cref{sec:eval_rq1} on a second model and dataset: \qwen as a zero-shot toxicity classifier on \civil, with the same protocol, attack, baselines, and metrics. Each case compares two identity groups of the same category: its population consists of all comments about either group, and a positive prediction flags a comment as toxic. The three cases are Christian vs.\ Muslim (very unfair, true \ac{DP} $-0.228$), Christian vs.\ Jewish (moderate, $-0.190$), and Black vs.\ White (near-fair, $-0.060$), the last only $0.010$ outside the fair band (\Cref{tab:6_cases}). The honest auditors' data is as heterogeneous as on \biasbios, and \Cref{fig:rq1-civil,tab:rq1-cc-table} show the same picture. Across all runs with at least one adversary, the attack succeeds in $97\%$ of the audits without a defense, in $48\%$ and $59\%$ with the median and the trimmed mean, and in $26\%$ with \sys, all on the near-fair case. \sys lowers the mean \ac{DP} estimation error from $0.156$ without a defense to $0.034$ ($-78\%$), whereas the median ($0.180$) and the trimmed mean ($0.171$) are even less accurate than no defense at all. Without a defense, a single adversary fairwashes every run on the moderate and near-fair cases. \sys, in contrast, is never fooled on the very unfair and moderate cases, but it fails on the near-fair one. Already without adversaries it declares the model fair in $50\%$ of the runs: down-weighting honest auditors whose updates deviate from their own history by chance leaves a mean error of $+0.010$, exactly the distance to the band. The median and the trimmed mean do
the same ($50\%$ at $K = 0$). Under attack, \sys is fooled in $55$--$80\%$ of the runs up to $8$ adversaries and in every run at $K = 19$. Its estimate nevertheless stays closest to the truth of all methods (mean error $+0.015$ to
$+0.036$, against $+0.061$ without a defense; Table~\ref{tab:rq1-cc-table}), but an error of a few hundredths suffices to cross a band a hundredth away. A disparity this close to the threshold is at the limit of what an audit on heterogeneous data can resolve. Robust aggregation is pushed past the band. All three \civil cases have mid-range positive rates, so the median and the trimmed mean behave as on the moderate
\biasbios case: they are fooled in $33$--$68\%$ (median) and $35$--$80\%$ (trimmed mean) of the attacked runs, and they
do not stop at the band. At $K = 19$ the median's mean error exceeds the true disparity on both the very unfair and the moderate cases, \ie the estimate is pushed past $0$.

\subsection{RQ4: Sampling without replacement}
\label{app:exp_sample}
Throughout the paper, each auditor samples its query set without replacement. By the end of the audit it has queried every one of its rows once. Here we justify this choice by comparing it with querying with replacement, where each query is drawn uniformly from the auditor's query set. Both modes use the same data splits, graphs, seeds and number of rounds as RQ1, and we compare with the true DP.

\begin{figure*}[t]
  \centering
  \subfloat[\civil]{%
    \begin{minipage}{.85\textwidth}
      \includegraphics[width=\textwidth]{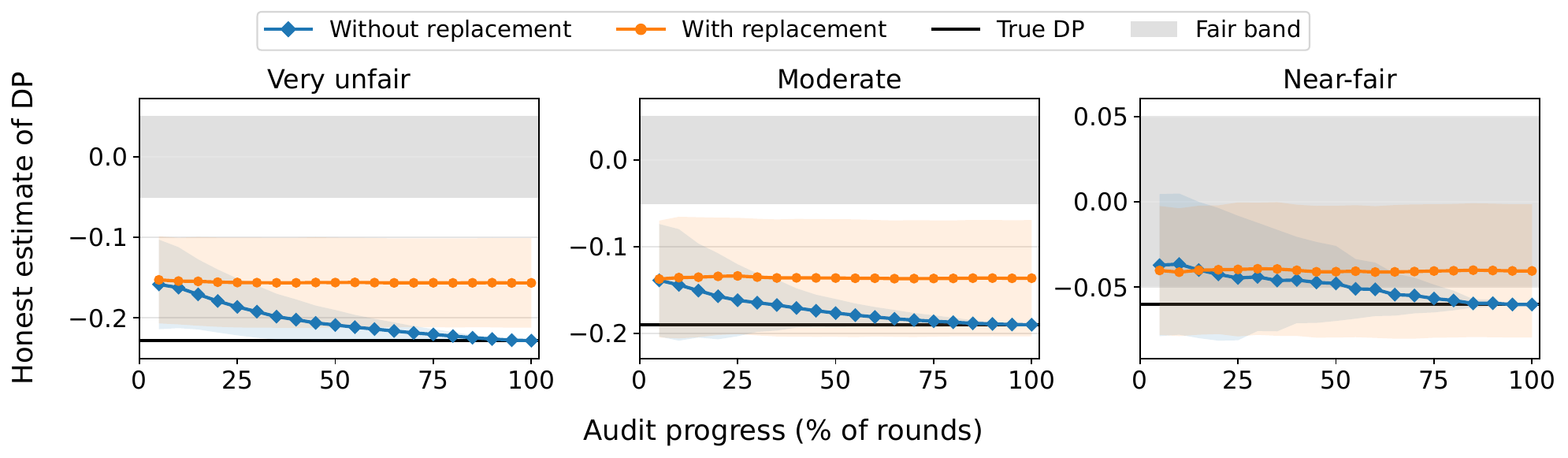}
      \includegraphics[width=\textwidth]{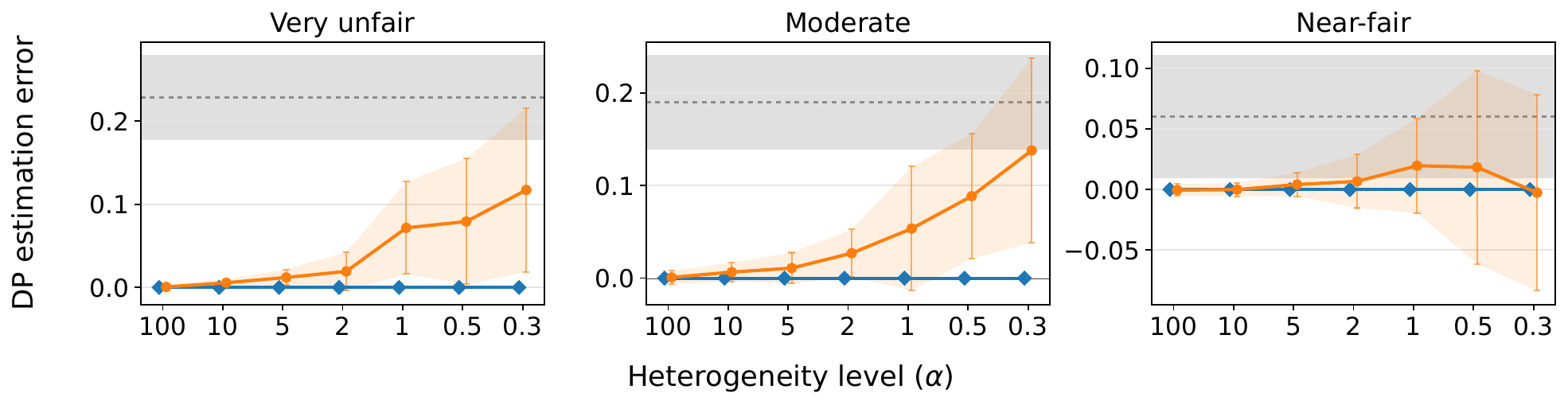}\\     
    \end{minipage}
  }\\[0.3em]
  \subfloat[\biasbios]{%
    \begin{minipage}{.85\textwidth}
      \includegraphics[width=\textwidth]{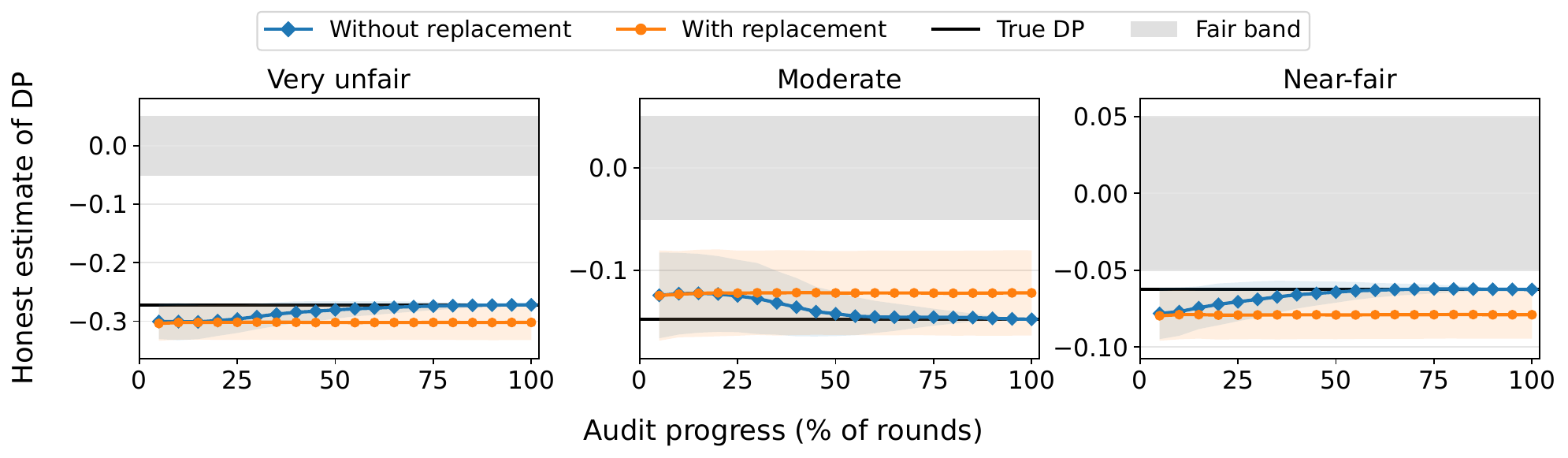}
      \includegraphics[width=\textwidth]{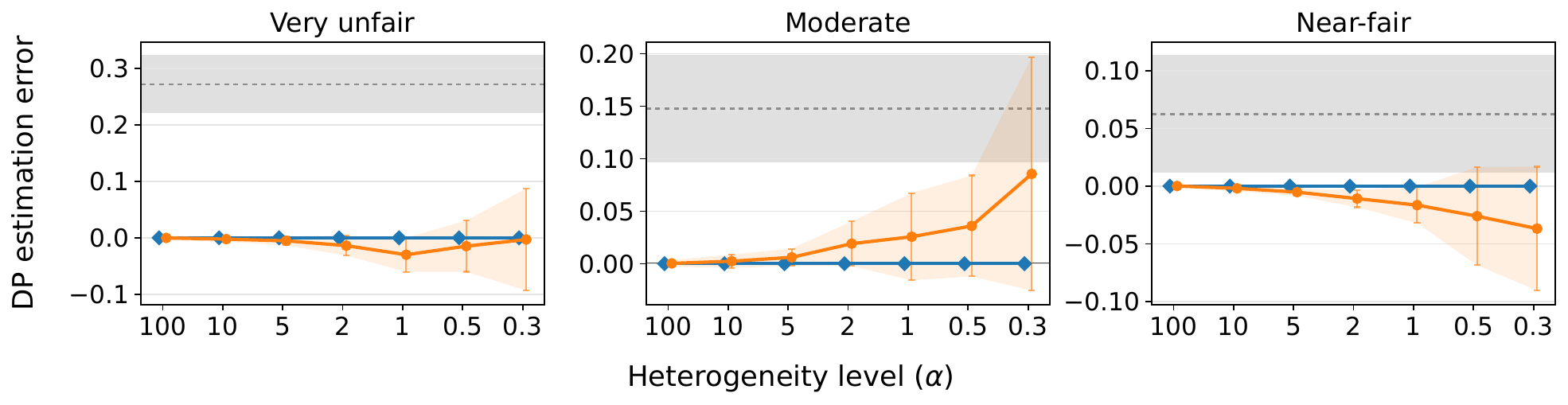}
    \end{minipage}
  }
  \caption{Honest auditors only ($N = 20$, no adversary), querying without and with replacement (mean $\pm$ SD over
  $20$ runs; top rows: \civil, bottom rows: \biasbios). Without replacement the estimate converges to the true
  DP; with replacement it stays where it started, off the true DP by an amount that grows with heterogeneity.} 
  \label{fig:app-honest-replacement}
\end{figure*}

With replacement, even honest auditors never reach the true DP. Figure~\ref{fig:app-honest-replacement} follows the honest auditors' aggregated estimate through the audit, with no adversary. In both modes the first rounds give the same estimate, off the true DP. Every auditor contributes $300$ queries per round, so a small query set counts as much as a large one. Without replacement, the small query sets run out as the audit proceeds while the large ones keep contributing, and at the end the estimate equals the true DP in every run. With replacement, every auditor keeps drawing $300$ queries per round from its query set, so the estimate stays where it started. On Christian vs.\ Muslim, for instance, it goes from $-0.151$ after the
first round to $-0.157$ at the end, against a true DP of $-0.228$, a third of the disparity lost. On the near-fair Black vs.\ White case, the offset even puts the estimate inside the fair band. The bias grows with heterogeneity. With near-homogeneous data ($\alpha = 100$), all query sets look alike and weighting them equally changes nothing. The error is at most $0.001$. As the query sets become more heterogeneous it grows, \eg on Christian vs.\ Muslim from $+0.012$ at $\alpha = 5$ to $+0.072$ at $\alpha = 1$ and $+0.117$ at $\alpha = 0.3$. Heterogeneous query sets are the setting of this paper, so querying with replacement would bias every audit, independently of any adversary.

\subsection{RQ5: Robust-aggregation baselines}
\label{app:detailed_baselines}
In the main text, the median and the trimmed mean aggregate each group's positive rate over the sources of an auditor's table ($a/A$ and $b/B$ separately), and the trimmed mean cuts $\beta = 20\%$ of the sources from each end. Here we test whether a different choice rescues them: aggregating each source's local DP $a/A - b/B$ instead, and other trimming fractions $\beta \in \{10, 20, 30, 40\}\%$. Everything else is as in RQ1.

\textbf{Aggregating local DPs.}
\Cref{fig:app-local} applies the median and the trimmed mean to each source's local \ac{DP} instead of its per-group positive rates.
Heterogeneity biases both aggregators even without adversaries: the honest auditors' local \acp{DP} are spread widely around the true \ac{DP} (\Cref{fig:collaborative-auditing}), so their median is not the pooled \ac{DP}. With no adversary, it already declares the model fair in $60\%$ of the audits on \textit{teacher} and $35\%$ on \textit{physician}.

\textbf{Changing the trimming fraction.}
Trimming more makes the trimmed mean closer to the median (Figure~\ref{fig:app-trims}). No fraction comes close to \sys. Even choosing the best $\beta$ for each case separately, which an auditor could not do without knowing the true DP, the trimmed mean is fooled in $47\%$ of the runs on Christian vs.\ Muslim, $32\%$ on Christian vs.\ Jewish, $48\%$ on physician and $25\%$ on teacher, where \sys is fooled in at most $9\%$. Only on \textit{nurse} (both $0\%$) and on the near-fair Black vs.\ White ($73\%$ against $78\%$) does it match \sys. Larger fractions also inherit the median's other failure: on nurse at $K = 10$, the estimate is pushed away from the band, overstating the disparity; and without adversaries, the false fair verdicts on teacher rise from $20\%$ to $35\%$ as $\beta$ grows.

Neither choice makes robust aggregation robust. Aggregating local DPs trades being pushed away from the band for being pushed into it, and no trimming fraction closes the gap to \sys. Both fail for the same reason as in the main text. On heterogeneous data the honest auditors' values are not concentrated, which is the assumption robust aggregation relies on.

\begin{figure*}[t]
  \centering
  \subfloat[\civil\label{fig:app-local-success}]{%
    \begin{minipage}{.85\textwidth}
      \includegraphics[width=\textwidth]{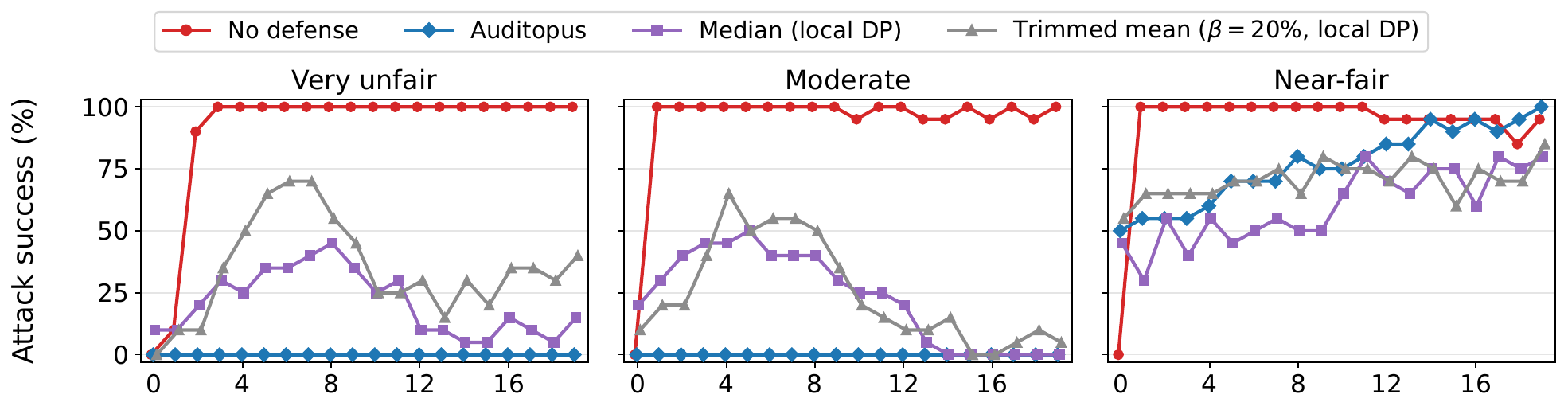}\\
      \includegraphics[width=\textwidth]{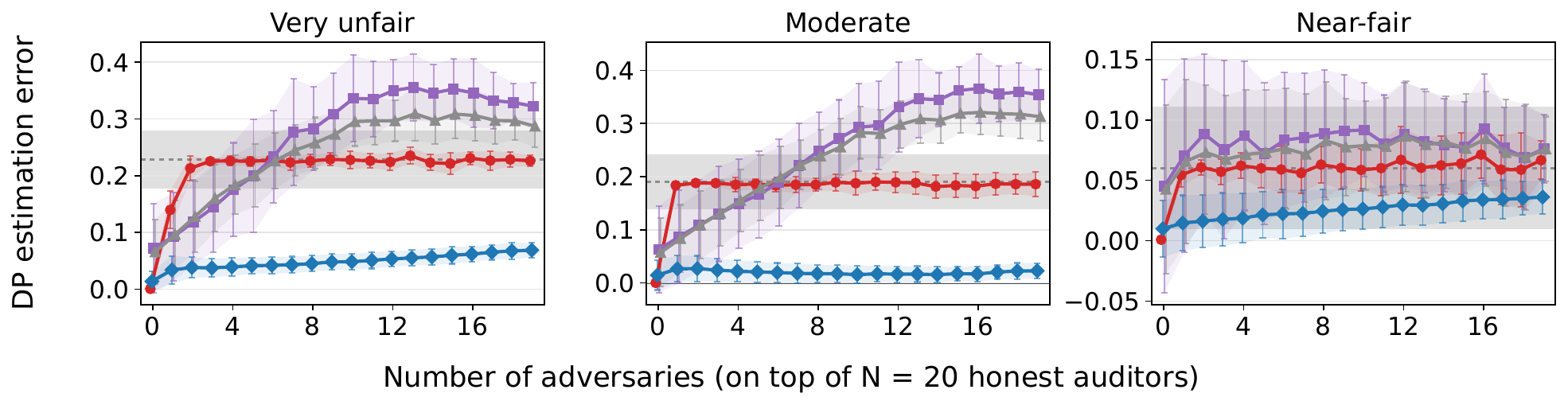}l
      \end{minipage}
  }\\[0.3em]
  \subfloat[\biasbios\label{fig:app-local-signed}]{%
    \begin{minipage}{.85\textwidth}
      \includegraphics[width=\textwidth]{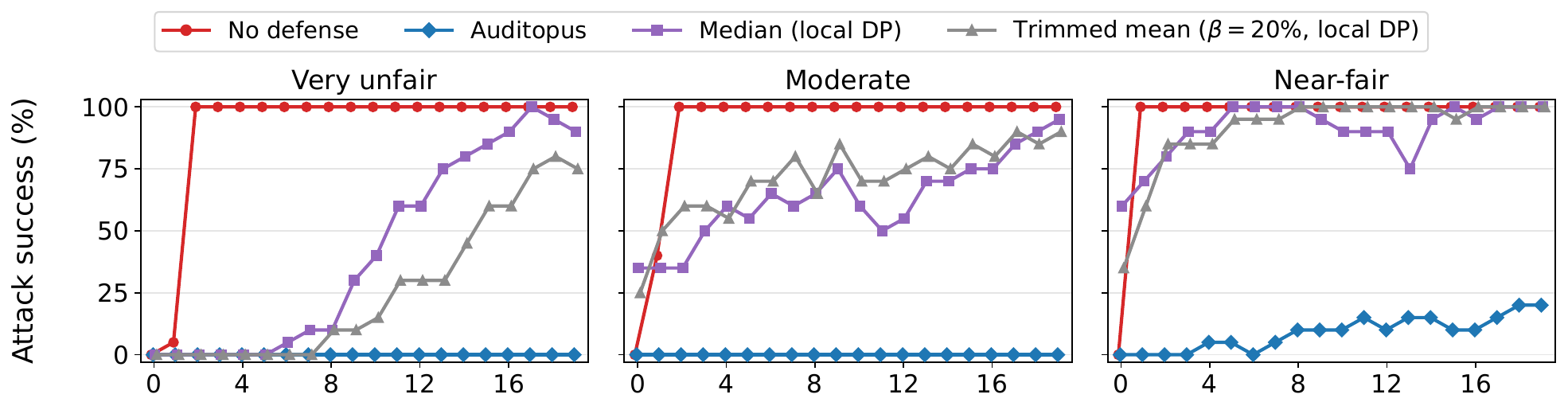}\\
      \includegraphics[width=\textwidth]{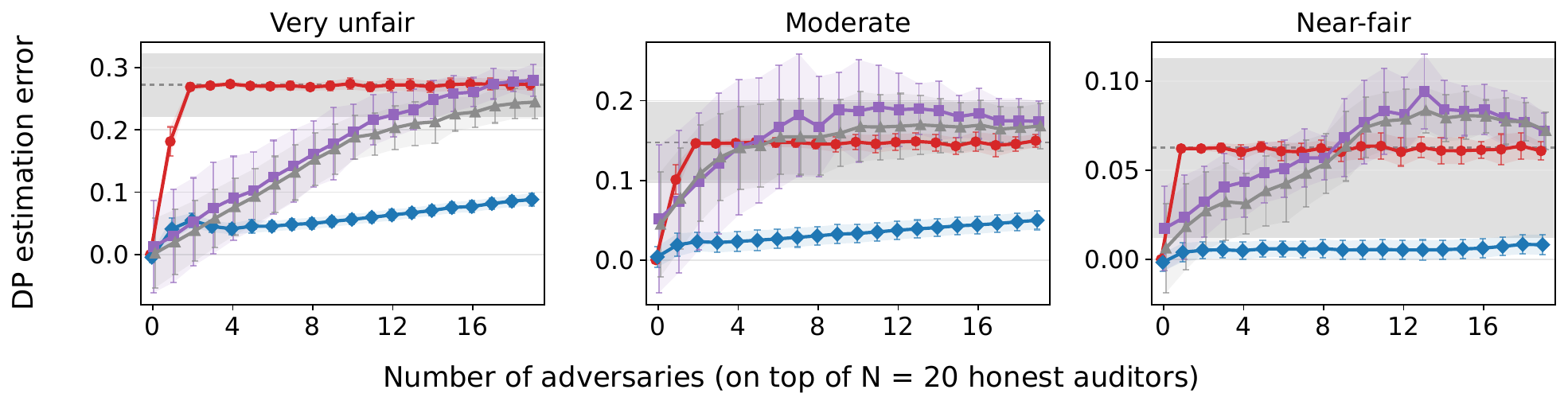}
    \end{minipage}
  }
  \caption{The median and the $20\%$ trimmed mean applied to each source's local DP estimate, against no defense and \sys{}, in
  the setting of RQ1 (top rows: \civil; bottom rows: \biasbios).}
  \label{fig:app-local}
\end{figure*}

\begin{figure*}[t]
  \centering
  \subfloat[\civil \label{fig:app-trims-success}]{%
    \begin{minipage}{.85\textwidth}
      \includegraphics[width=\textwidth]{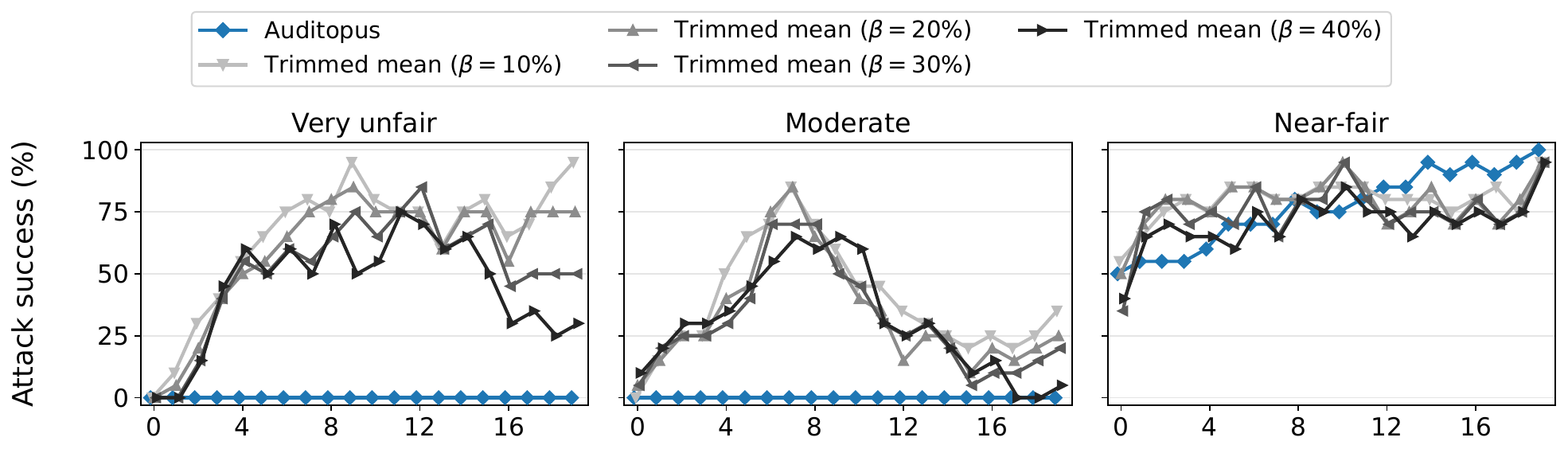}\\
      \includegraphics[width=\textwidth]{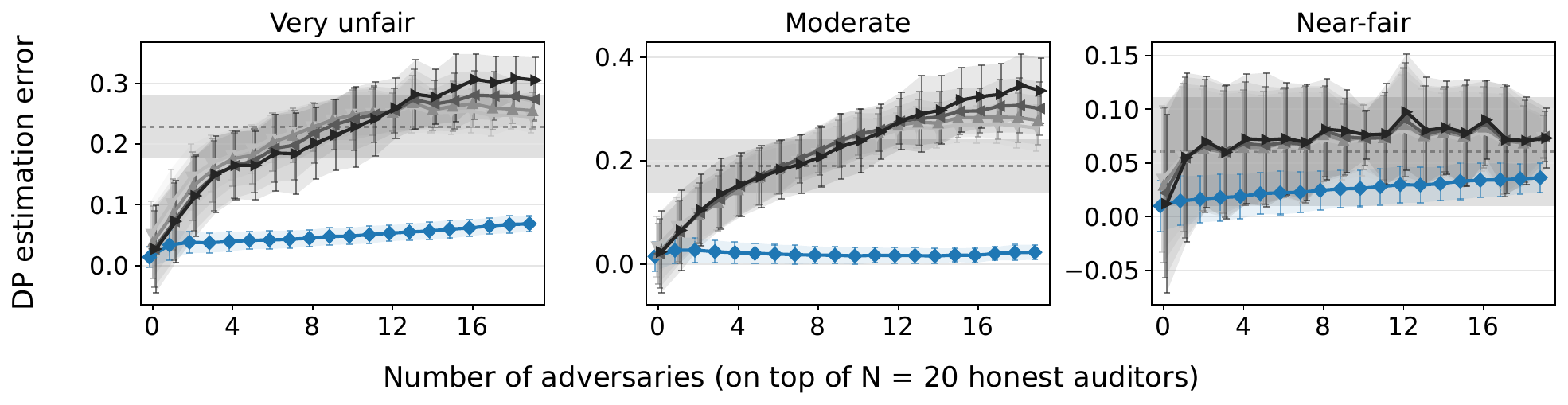}
    \end{minipage}
  }\\[0.3em]
  \subfloat[\biasbios\label{fig:app-trims-signed}]{%
    \begin{minipage}{.85\textwidth}
      \includegraphics[width=\textwidth]{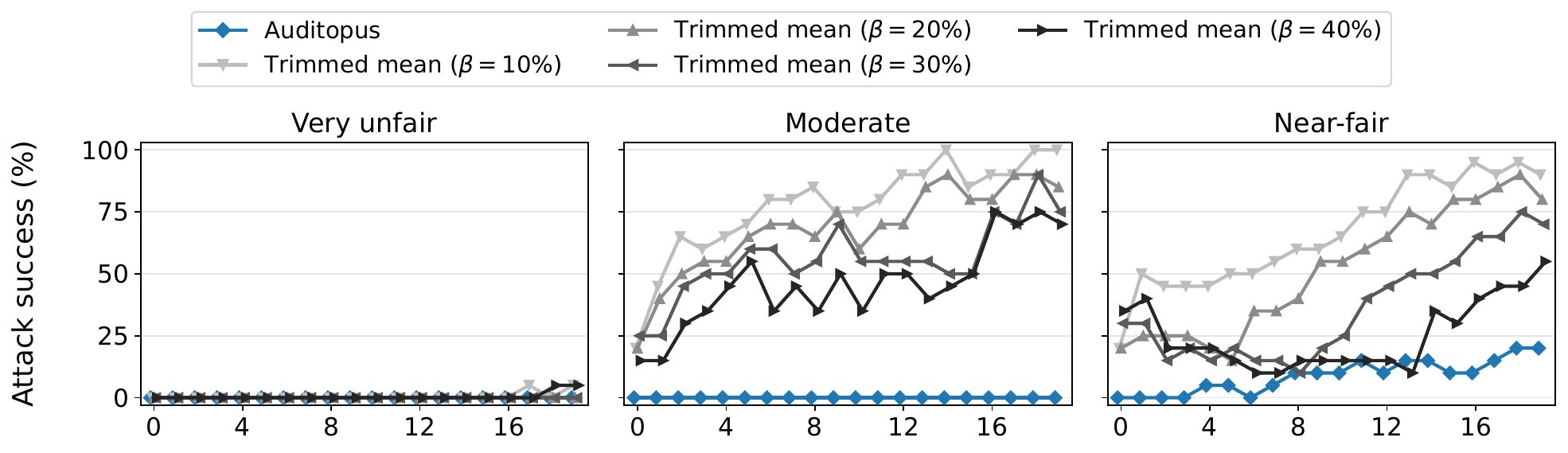}\\
      \includegraphics[width=\textwidth]{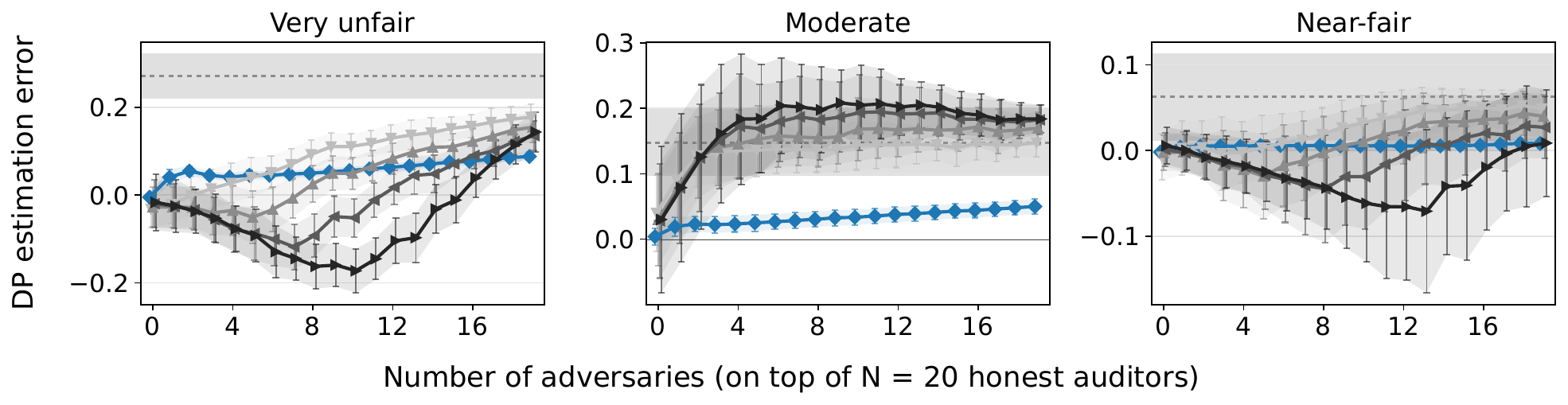}
    \end{minipage}
  }
  \caption{The trimmed mean of the sources' group rates at trimming fractions $\beta = 10$, $20$, $30$ and $40\%$,
  against \sys{}, in the setting of RQ1.}
  \label{fig:app-trims}
\end{figure*}

\subsection{RQ6: Varying the query budget}
\label{app:budget}

In the main text every auditor issues $n = 300$ queries per round. Here we vary $n$ from $100$ to $3{,}000$, with
$K = 6$ adversaries ($30\%$ of the $N = 20$ honest auditors), who fabricate $n$ queries per round as well; everything
else is as in RQ1. Since auditors query without replacement, every honest auditor still queries its whole query set,
whatever the budget. The budget only changes how the audit is split into rounds. The number of rounds is
$\lceil \max_i |\mathcal{D}_i| / n \rceil$, from $100$--$172$ at $n = 100$ to only $1$--$6$ at $n = 3{,}000$, and with it the size of each update that \sys scores. 
$T = 150$ cap defined in \Cref{sec:exp_setup} is for $n = 300$.  

\sys{} is unaffected as long as the audit has several rounds. Figure~\ref{fig:app-budget} shows attack success on both datasets. On the very unfair and moderate cases of both datasets, \sys{} is never fooled at any budget up to $3{,}000$. On the near-fair case in \biasbios it is fooled in at most $15\%$ of the runs up to $n = 2{,}000$, and in $40\%$ at $3{,}000$, where the audit is down to about six rounds. With so few updates per source, \sys{} has little history to judge each update against. On Black vs.\ White, which lies only $0.010$ outside the band, \sys{} is fooled in $65$--$95\%$ of the runs at every budget, as in RQ3.

The undefended audit is fooled less often at the largest budgets. Without a defense, the attack succeeds in at least $80\%$ of the runs of every case up to $n = 2{,}000$ ($70\%$ on Christian vs.\ Jewish), but less often at $3{,}000$ (\eg $60\%$ on physician and $45\%$ on Christian vs.\ Jewish). This is not a defense but a shorter audit: with only a few rounds, gossip cannot deliver the adversaries' latest counts to every honest auditor before the audit ends, so the adversaries' fabricated counts reach fewer auditors, whose estimates stay further from $0$.
\sys{}'s advantage does not depend on the query budget as long as the audit spans several rounds. When a large budget
shrinks the audit to a handful of rounds, \sys{} has little history per source to compare against, the same limitation
that makes a one-shot audit undefendable (Figure~\ref{fig:one_shot_audit}).

\begin{figure*}[t]
  \centering
  \subfloat[\civil \label{fig:app-budget-civil}]{%
      \includegraphics[width=.85\textwidth]{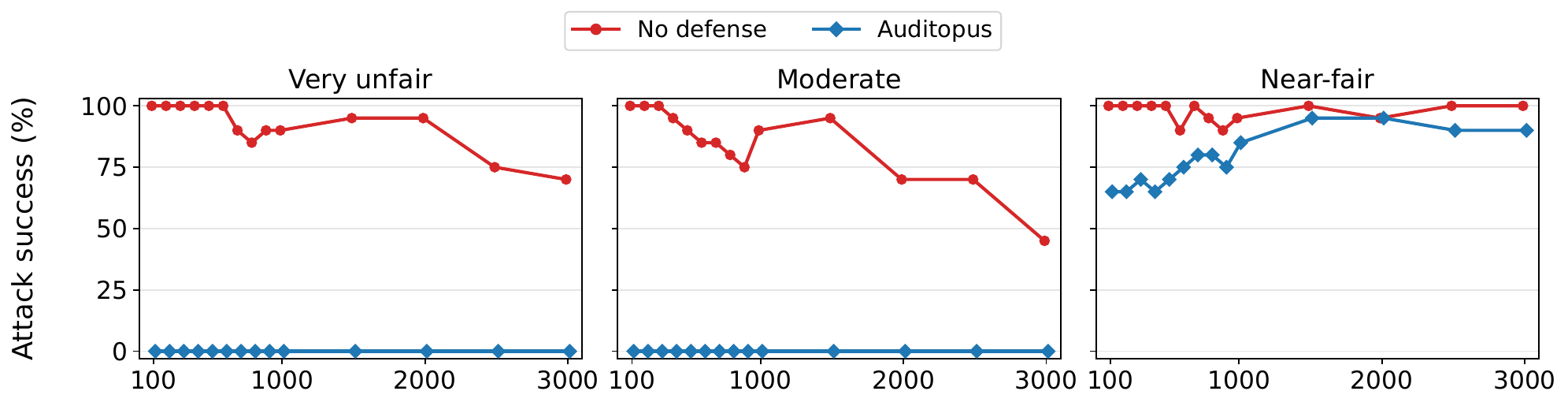} 
  }\\[0.3em]
  \subfloat[\biasbios\label{fig:app-budget-bios}]{%
    \includegraphics[width=.85\textwidth]{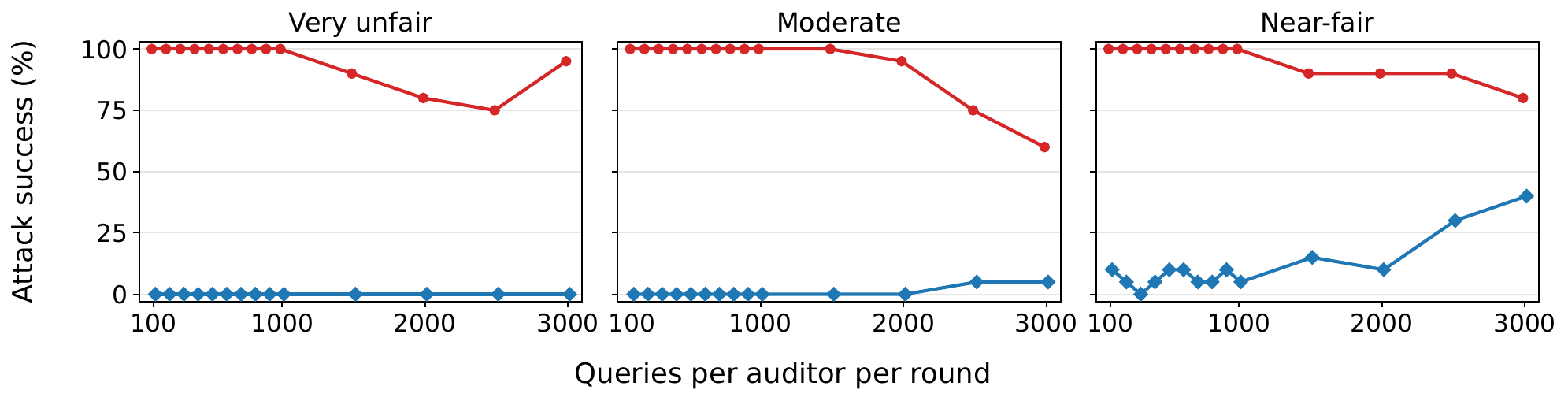}
  }
  \caption{Attack success as the number of queries per auditor per round grows from $100$ to $3{,}000$, with $K = 6$
  adversaries next to $N = 20$ honest auditors ($20$ runs per point); \civil (top) and \biasbios (bottom).}
  \label{fig:app-budget}
\end{figure*}
 
\end{document}